\def\bbE{\mathbb{E}}
\def\calH{\mathcal{H}}
\def\calI{\mathcal{I}}
\def\bbR{\mathbf{R}}
\def\calX{\mathcal{X}}
\def\calF{\mathcal{F}}
\def\calG{\mathcal{G}}
\def\KL{\ensuremath{\mathbb{KL}}}
\def\lin{lin}
\def\sgn{\text{sign}}
\def\calQ{\mathcal{Q}}
\def\calN{\mathcal{N}}
\def\ipm{\text{IPM}}
\newtheorem{theorem}{Theorem}
\newtheorem{lemma}{Lemma}
\newtheorem{definition}{Definition}
\newtheorem{corollary}{Corollary}
\newcommand{\dist}{\ensuremath{\mathcal{D}}}
\newcommand{\renyidiv}{\ensuremath{\mathcal{D}_{R,\alpha}}}
\newcommand{\squishlist}{
	\begin{list}{$\bullet$}
		{
			\setlength{\itemsep}{0pt}
			\setlength{\parsep}{3pt}
			\setlength{\topsep}{3pt}
			\setlength{\partopsep}{0pt}
			\setlength{\leftmargin}{1.5em}
			\setlength{\labelwidth}{1em}
			\setlength{\labelsep}{0.5em} } }
	\newcommand{\squishend}{
\end{list}  }
\title{Capacity Bounded Differential Privacy}
\author{
Kamalika Chaudhuri\\
UC San Diego\\
\texttt{kamalika@cs.ucsd.edu}\\
\And
Jacob Imola\\
UC San Diego\\
\texttt{jimola@eng.ucsd.edu}\\
\And
Ashwin Machanavajjhala\\
Duke University\\
\texttt{ashwin@cs.duke.edu}
} 
\begin{document}

\maketitle

\begin{abstract}
Differential privacy has emerged as the gold standard for measuring the risk posed by an algorithm's output to the privacy of a single individual in a dataset. It is defined as the worst-case distance between the output distributions of an algorithm that is run on inputs that differ by a single person. 
In this work, we present a novel relaxation of differential privacy, \textit{capacity bounded differential privacy}, where the adversary that distinguishes the output distributions is assumed to be \textit{capacity-bounded} -- i.e. bounded not in computational power, but in terms of the function class from which their attack algorithm is drawn. We model adversaries of this form using restricted $f$-divergences between probability distributions, and study properties of the definition and algorithms that satisfy them. Our results demonstrate that these definitions possess a number of interesting properties enjoyed by differential privacy and some of its existing relaxations; additionally, common mechanisms such as the Laplace and Gaussian mechanisms enjoy better privacy guarantees for the same added noise under these definitions. 
\end{abstract}

\section{Introduction}

Differential privacy \cite{Dwork2014:text} has emerged as a gold standard for measuring the privacy risk posed by algorithms analyzing sensitive data. A randomized algorithm satisfies differential privacy if an arbitrarily powerful attacker is unable to distinguish between the output distributions of the algorithm when the inputs are two datasets that differ in the private value of a single person. This provides a  guarantee that the additional disclosure risk to a single person in the data posed by a differentially private algorithm is limited, even if the attacker has access to side information. However, a body of prior work~\cite{Sarwate2013:signal, Chaudhuri2011:ERM, Kifer2012:erm, Abadi2016:deep}  has shown that this strong privacy guarantee comes at a cost: for many machine-learning tasks, differentially private algorithms require a much higher number of samples to acheive the same amount of accuracy than is needed without privacy.

Prior work has considered relaxing differential privacy in a number of different
ways. Pufferfish~\cite{Kifer2014:tods} and Blowfish~\cite{He2013:blowfish}
generalize differential privacy by restricting the properties of an individual
that should not be inferred by the attacker, as well as explicitly enumerating
the side information available to the adversary.  Renyi- and KL-differential
privacy~\cite{Mironov2017:Renyi, Wang2015:generalization} measure privacy loss
as the $\alpha$-Renyi and KL-divergence between the output distributions
(respectively). The original differential privacy definition measures privacy as
a max-divergence (or $\alpha$-Renyi, with $\alpha \rightarrow \infty$).
Computational differential privacy (CDP) \cite{Mironov2009:CDP} considers a
computationally bounded attacker, and aims to ensure that the output
distributions are computationally indistinguishable. These three approaches are
orthogonal to one another as they generalize or relax different aspects of the
privacy definition.

In this paper, we consider an novel approach to relaxing differential privacy by
restricting the adversary to ``attack" or post-process the output of a private
algorithm using functions drawn from a \textit{restricted function class}. These
adversaries, that we call \textit{capacity bounded}, model scenarios where the
attacker is machine learnt and lies in some known space of functions (e.g., all
linear functions, linear classifiers, multi-layer deep networks with given
structure, etc.). A second application of this setting is a user under a
data-usage contract that restricts how the output of a private algorithm can be
used. If the contract stipulates that the user can only compute a certain class
of functions on the output, then a privacy guarantee of this form ensures that
no privacy violation can occur if users obey their contracts. Unlike
computational DP, where computationally bounded adversaries do not meaningfully
relax the privacy definition in the typical centralized differential privacy
model \cite{Groce2011:limitsCDP}, we believe that capacity bounded adversaries
will relax the definition to permit more useful algorithms and are a natural and
interesting class of adversaries.

The first challenge is how to model these adversaries. We begin by showing that
privacy with capacity bounded adversaries can be cleanly modeled through the
restricted divergences framework~\cite{Liu2018:inductive, Liu2017:approximation, Nowozin2016:fgan} that has been recently used to build a theory for generative adversarial networks. This gives us a notion of \textit{$(\mathcal{H}, \Gamma)$-capacity bounded differential privacy}, where the privacy loss is measured in terms of a divergence $\Gamma$ (e.g., Renyi) between output distributions of a mechanism on datasets that differ by a single person restricted to functions in $\mathcal{H}$ (e.g., $lin$, the space of all linear functions).

We next investigate properties of these privacy definitions, and show that they enjoy many of the good properties enjoyed by differential privacy and its relaxations -- convexity, graceful composition, as well as post-processing invariance to certain classes of functions. We analyze well-known privacy mechanisms, such as the Laplace and the Gaussian mechanism under $(lin, \KL)$ and $(lin, Renyi)$ capacity bounded privacy -- where the adversaries are the class of all linear functions. We show that restricting the capacity of the adversary does provide improvements in the privacy guarantee in many cases. We then use this to demonstrate that the popular Matrix Mechanism~\cite{Li2010:matrix, Li2012:matrix, McKenna2018:matrix} gives an improvement in the privacy guarantees when considered under capacity bounded definition. 

We conclude by showing some preliminary results that indicate that the capacity
bounded definitions satisfy a form of algorithmic generalization. Specifically, for every class of queries $\calQ$, there exists a (non-trivial) $\calH$ such that an algorithm
that answers queries in the class $\calQ$ and is $(\calH, \KL)$-capacity bounded
private with parameter $\epsilon$ also ensures generalization with parameter
$O(\sqrt{\epsilon})$. 

The main technical challenge we face is that little is known about properties of restricted divergences. While unrestricted divergences such as KL and Renyi are now well-understood as a result of more than fifty years of research in information theory, these restricted divergences are only beginning to be studied in their own right. A side-effect of our work is that we advance the information geometry of these divergences, by establishing properties such as versions of Pinsker's Inequality and the Data Processing Inequality. We believe that these will be of independent interest to the community and aid the development of the theory of GANs, where these divergences are also used. 

\section{Preliminaries}\label{sec:background}

\subsection{Privacy} 

Let $D$ be a dataset, where each data point represents a single person's value.
A randomized algorithm $A$ satisfies differential privacy~\cite{Dwork2014:text}
if its output is insensitive to adding or removing a data point to its input
$D$. We can define this privacy notion in terms of the Renyi Divergence of two
output distributions: $A(D)$ -- the distribution of outputs generated by $A$
with input $D$, and $A(D')$, the distrbution of outputs generated by $A$ with
input $D'$, where $D$ and $D'$ differ by a single person's
value~\cite{Mironov2017:Renyi}. Here, recall that the Renyi divergence of order
$\alpha$ between distributions $P$ and $Q$ can be written as: $ \renyidiv (P, Q) = \frac{1}{\alpha - 1} \log \left( \int_x P(x)^{\alpha} Q(x)^{1 - \alpha} dx\right).$

\begin{definition} [Renyi Differential Privacy]
A randomized algorithm $A$ that operates on a dataset $D$ is said to provide $(\alpha, \epsilon)$-Renyi differential privacy if for all $D$ and $D'$ that differ by a single person's value, we have: $\dist_{R, \alpha}(A(D), A(D')) \leq \epsilon.$
\label{def:renyidp}
\end{definition}

When the order of the divergence $\alpha \rightarrow \infty$, we require the
max-divergence of the two distrbutions bounded by $\epsilon$ -- which is
standard differential privacy~\cite{Dwork2006:dp}. When $\alpha \rightarrow 1$,
$\renyidiv$ becomes the Kullback-Liebler (KL) divergence, and we get KL
differential privacy~\cite{Wang2016:klprivacy}.

\subsection{Divergences and their Variational Forms} 

A popular class of divergences is Czisar's $f$-divergences~\cite{Csiszar1967},
defined as follows.
\begin{definition}
Let $f$ be a lower semi-continuous convex function such that $f(1) = 0$, and let
  $P$ and $Q$ be two distributions over a probability space $(\Omega, \Sigma)$
  such that $P$ is absolutely continuous with respect to $Q$. Then, the
  $f$-divergence between $P$ and $Q$, denoted by $\dist_f(P, Q)$ is defined as:
  $\dist_f(P, Q) = \int_{\Omega} f\left(\frac{dP}{dQ}\right) dQ$.
\end{definition}

Examples of $f$-divergences include the KL divergence ($f(t) = t \log t$), the
total variation distance ($f(t) = \frac{1}{2}|t - 1|$) and $\alpha$-divergence
($f(t) = (|t|^{\alpha} - 1)/(\alpha^2 - \alpha)$). 

Given a function $f$ with domain $\bbR$, we use $f^*$ to denote its Fenchel conjugate: $f^*(s) = \sup_{x \in \bbR} x \cdot s - f(x).$  \cite{Nguyen2010} shows that $f$-divergences have a dual variational form:
\begin{equation}\label{eqn:fdivdual}
 \dist_f(P, Q) = \sup_{h \in \mathcal{F}} \bbE_{x \sim P} [h(x)] - \bbE_{x \sim Q}[f^*(h(x))], 
\end{equation}
where $\mathcal{F}$ is the set of all functions over the domain of $P$ and $Q$.

\paragraph{Restricted Divergences.} Given an $f$-divergence and a class of functions $\calH \subseteq \mathcal{F}$, we can define a notion of a $\calH$-restricted $f$-divergence by selecting, instead of $\mathcal{F}$, the more restricted class of functions $\calH$, to maximize over in~\eqref{eqn:fdivdual}:
\begin{equation}\label{eqn:fdivrestricted}
 \dist^{\calH}_f(P, Q) = \sup_{h \in \calH} \bbE_{x \sim P} [h(x)] - \bbE_{x \sim Q}[f^*(h(x))], 
\end{equation}
These restricted divergences have previously been considered in the context of, for example, Generative Adversarial Networks~\cite{Nowozin2016:fgan, Arora2017, Liu2017:approximation, Liu2018:inductive}.

While Renyi divergences are not $f$-divergences in general, we can also define
restricted versions for them by going through the corresponding
$\alpha$-divergence -- which, recall, is an $f$-divergence with $f(t) =
(|t|^{\alpha} - 1)/(\alpha^2 - \alpha)$, and is related to the Renyi divergence
by a closed form equation~\cite{Cichocki2010:divergences}. Given a function
class $\calH$, an order $\alpha$, and two probability distributions $P$ and $Q$,
we can define the $\calH$-restricted Renyi divergence of order $\alpha$ using
the same closed form equation on the $\calH$-restricted $\alpha$-divergence as
follows:
\begin{equation} \label{eqn:alpharenyi}
\renyidiv^{\calH} (P, Q) = \left( \log \left( 1 + \alpha (\alpha - 1) \dist_{\alpha}^{\calH} (P, Q) \right)\right)/(\alpha - 1)  
\end{equation}
where $\dist_{\alpha}^{\calH}$ is the corresponding $\calH$-restricted $\alpha$-divergence.

\section{Capacity Bounded Differential Privacy}\label{sec:cbp}

The existence of $\calH$-restricted divergences suggests a natural notion of
privacy -- when the adversary lies in a (restricted) function class $\calH$, we
can, instead of $\calF$, consider the class $\calH$ of functions in the
supremum. This enforces that no adversary in the function class $\calH$ can
distinguish between $A(D)$ and $A(D')$ beyond $\epsilon$. We call these
\emph{capacity bounded adversaries}.

\begin{definition}[$(\calH, \Gamma)$-Capacity Bounded Differential Privacy]
  \label{def:cbp}
Let $\calH$ be a class of functions with domain $\calX$, and $\Gamma$ be a divergence. A mechanism $A$ is said to offer $(\calH, \Gamma)$-capacity bounded privacy with parameter $\epsilon$ if for any two $D$ and $D'$ that differ by a single person's value, the $\calH$-restricted $\Gamma$-divergence between $A(D)$ and $A(D')$ is at most $\epsilon$: 
\[ \Gamma^{\calH}(A(D), A(D')) \leq \epsilon \] 
\end{definition}

When $\calH$ is the class of all functions, and $\Gamma$ is a Renyi divergence, the definition reduces to Renyi Differential privacy; capacity bounded privacy is thus a generalization of Renyi differential privacy. 

\paragraph{Function Classes.} The definition of capacity bounded privacy allows for an infinite number of variations corresponding to the class of adversaries $\calH$.
 
An example of such a class is all linear adversaries over a feature space $\phi$, which includes all linear regressors over $\phi$. A second example is the class of all functions in an Reproducible Kernel Hilbert Space; these correspond to all kernel classifiers. A third interesting class is linear combinations of all Relu functions; this correspond to all two layer neural networks. These function classes would capture typical machine learnt adversaries, and designing mechanisms that satisfy capacity bounded DP with respect to these functions classes is an interesting research direction.




\section{Properties}

The success of differential privacy has been attributed its highly desirable
properties that make it amenable for practical use. In particular,
\cite{Kifer2012} proposes that any privacy definition should have three
properties -- convexity, post-processing invariance and graceful composition --
all of which apply to differential privacy. We now show that many of these
properties continue to hold for the capacity bounded definitions. The proofs
appear in Appendix~\ref{sec:app-properties}.

\paragraph{Post-processing.} Most notions of differential privacy satisfy
post-processing invariance, which states that applying any function to the
output of a private mechanism does not degrade the privacy guarantee. We cannot
expect post-processing invariance to hold with respect to all functions for
capacity bounded privacy -- otherwise, the definition would be equivalent to
privacy for all adversaries!

However, we can show that for any $\calH$ and for any $\Gamma$, $(\calH,
\Gamma)$-capacity bounded differential privacy is preserved after
post-processing if certain conditions about the function classes hold:

\begin{theorem}\label{thm:postprocessing}
Let $\Gamma$ be an $f$-divergence or the Renyi divergence of order $\alpha > 1$,
  and let $\calH$ $\calG$, and $\calI$ be function classes such that for any
  $g \in \calG$ and $i \in \calI$, $i \circ g \in \calH$. If algorithm $A$ satisfies
  $(\calH, \Gamma)$-capacity bounded privacy with parameter $\epsilon$, then,
  for any $g \in \calG$, $g \circ A$ satisfies $(\calI, \Gamma)$-capacity
  bounded privacy with parameter $\epsilon$.
\end{theorem}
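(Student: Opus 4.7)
The plan is to pass through the variational/dual representation of the divergence and use the composition assumption $i \circ g \in \calH$ to reduce the restricted divergence after post-processing back to the restricted divergence before post-processing.

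First I would handle the $f$-divergence case. Fix $g \in \calG$ and any neighboring datasets $D, D'$. Applying the variational form of the restricted $f$-divergence in \eqref{eqn:fdivrestricted} to the post-processed distributions gives
\[
\dist_f^{\calI}(g \circ A(D),\, g \circ A(D')) = \sup_{i \in \calI} \bbE_{x \sim g \circ A(D)}[i(x)] - \bbE_{x \sim g \circ A(D')}[f^*(i(x))].
\]
Next I would push $g$ into the test function by a change-of-variables / law-of-the-unconscious-statistician step: since $x \sim g \circ A(D)$ is obtained as $g(y)$ with $y \sim A(D)$, each expectation rewrites as an expectation of $i \circ g$ and $f^* \circ (i \circ g)$ under $A(D)$ and $A(D')$ respectively. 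By the hypothesis, $i \circ g$ ranges over a subset of $\calH$ as $i$ ranges over $\calI$, so enlarging the supremum to all of $\calH$ gives
\[
\sup_{i \in \calI} \bbE_{y \sim A(D)}[(i \circ g)(y)] - \bbE_{y \sim A(D')}[f^*((i \circ g)(y))] \;\leq\; \dist_f^{\calH}(A(D), A(D')) \;\leq\; \epsilon,
\]
which is exactly the claim.

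For the Renyi case (order $\alpha > 1$), I would reduce to the $f$-divergence case via \eqref{eqn:alpharenyi}: the $\alpha$-divergence is itself an $f$-divergence (with $f(t) = (|t|^\alpha - 1)/(\alpha^2 - \alpha)$), and $\renyidiv^\calH$ is a monotonically increasing function of $\dist_\alpha^\calH$ when $\alpha > 1$. Apply the argument above to $\dist_\alpha^{\calI}(g \circ A(D), g \circ A(D')) \leq \dist_\alpha^{\calH}(A(D), A(D'))$ and then compose with the monotone transformation to transfer the inequality to $\renyidiv^{\calI}$.

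The only subtle step is the change of variables in the second paragraph: it is routine for bounded measurable $i$ and any measurable $g$, but I would note it explicitly because it is the single place where the definition of the post-processed distribution enters. Everything else is bookkeeping, since the supremum-over-a-subset inequality is immediate and the Renyi-to-$f$-divergence reduction is a monotone transformation. No regularity issue arises as long as $A(D)$ remains absolutely continuous with respect to $A(D')$ after applying $g$, which is inherited from the assumption on $A$.
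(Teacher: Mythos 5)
Your proof is correct and takes essentially the same approach as the paper: both pass to the variational form of the restricted $f$-divergence, rewrite the post-processed expectations as expectations of $i \circ g$ under $A(D)$ and $A(D')$, enlarge the supremum from $\{i \circ g\}$ to $\calH$, and handle the Renyi case by the monotone transformation from the restricted $\alpha$-divergence. Your explicit treatment of the change of variables and the Renyi reduction is slightly more careful than the paper's, but the argument is the same.
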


Specifically, if $\calI = \calH$, then $A$ is post-processing invariant.
Theorem~\ref{thm:postprocessing} is essentially a form of the popular Data
Processing Inequality applied to restricted divergences; its proof is in the
Appendix and follows from the definition as well as algebra. An example of
function classes $\calG$,$\calH$, and $\calI$ that satisfy this conditions is when
$\calG, \calH, \calI$ are linear functions, where $\calG : \bbR^s \rightarrow
\bbR^d$, $\calH : \bbR^s \rightarrow \bbR$, and $\calI : \bbR^d \rightarrow
\bbR$.

\paragraph{Convexity.} A second property is convexity~\cite{Kifer2010:axioms},
which states that if $A$ and $B$ are private mechanisms with privacy
parameter $\epsilon$
then so is a composite mechanism $M$ that tosses a
(data-independent) coin and chooses to run $A$ with probability $p$ and $B$
with probability $1 - p$.  We show that convexity holds for $(\calH,
\Gamma)$-capacity bounded privacy for any $\calH$ and any $f$-divergence
$\Gamma$.

\begin{theorem}\label{thm:convexity}
  Let $\Gamma$ be an $f$-divergence and $A$ and $B$ be two mechanisms which
  have the same range and
  provide $(\calH, \Gamma)$-capacity bounded privacy with parameter $\epsilon$.
  Let $M$ be a mechanism which tosses an independent coin, and then executes
  mechanism $A$ with probability $\lambda$ and $B$ with probability $1-\lambda$.
  Then, $M$ satisfies $(\calH, \Gamma)$-capacity bounded privacy with parameter
  $\epsilon$.
\end{theorem}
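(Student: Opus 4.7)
The plan is to exploit the variational form \eqref{eqn:fdivrestricted}, which expresses the restricted $f$-divergence as a supremum of functionals that are \emph{linear} in the pair $(P,Q)$. Joint convexity then follows essentially for free from the subadditivity of $\sup$, just as in the classical (unrestricted) case.

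First, I would write out the output distributions of the composite mechanism: on any input $D$, $M(D)$ is the mixture $\lambda A(D) + (1-\lambda) B(D)$, and similarly $M(D') = \lambda A(D') + (1-\lambda) B(D')$ for any neighbor $D'$. Fixing an arbitrary $h \in \calH$, I would use linearity of expectation on both $\bbE_{x \sim M(D)}[h(x)]$ and $\bbE_{x \sim M(D')}[f^*(h(x))]$ to write
\begin{align*}
&\bbE_{x \sim M(D)}[h(x)] - \bbE_{x \sim M(D')}[f^*(h(x))] \\
&\quad = \lambda\bigl(\bbE_{x \sim A(D)}[h(x)] - \bbE_{x \sim A(D')}[f^*(h(x))]\bigr) \\
&\qquad + (1-\lambda)\bigl(\bbE_{x \sim B(D)}[h(x)] - \bbE_{x \sim B(D')}[f^*(h(x))]\bigr).
\end{align*}

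Next, I would take the supremum over $h \in \calH$ on both sides. On the left this yields $\dist^{\calH}_f(M(D), M(D'))$ by definition. On the right, I would apply the standard inequality $\sup_h (a(h) + b(h)) \leq \sup_h a(h) + \sup_h b(h)$ (using that $\lambda, 1-\lambda \geq 0$) to obtain
\[
\dist^{\calH}_f(M(D), M(D')) \leq \lambda \, \dist^{\calH}_f(A(D), A(D')) + (1-\lambda)\, \dist^{\calH}_f(B(D), B(D')).
\]
Finally, I would invoke the hypothesis that both $A$ and $B$ are $(\calH, \Gamma)$-capacity bounded private with parameter $\epsilon$, so each of the two terms on the right is at most $\epsilon$, giving the required bound $\lambda \epsilon + (1-\lambda)\epsilon = \epsilon$.

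The only subtle point, and the place to be careful, is verifying that the sup is evaluated over the \emph{same} class $\calH$ for both $A$ and $B$ and for the mixture---which is fine here because the definition of $(\calH, \Gamma)$-capacity bounded privacy uses a single fixed $\calH$, and the variational expression for each divergence uses that same $\calH$. The requirement that $A$ and $B$ share a common range (stated in the theorem) ensures that $h$ is defined on the support of both, so no measurability issue arises. No other obstacle is expected; this proof strategy is independent of which $f$ is chosen, so it gives convexity uniformly for every $f$-divergence.
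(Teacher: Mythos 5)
Your proposal is correct and follows essentially the same route as the paper's proof: expand the restricted divergence of the mixture via the variational form, use linearity of expectation under the mixture for both the $h$ term and the $f^*(h)$ term, and finish with the subadditivity of the supremum. If anything, your version is slightly cleaner, since you correctly treat the decomposition of $\bbE_{x\sim M(D')}[f^*(h(x))]$ as an exact equality (the paper writes it as an inequality attributed to ``concavity of log,'' which is unnecessary here).
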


We remark that while differential privacy and KL differential privacy satisfy
convexity, (standard) Renyi differential privacy does not; it is not surprising
that neither does its capacity bounded version. The proof uses convexity of the function $f$ in an $f$-divergence.

\paragraph{Composition.} Broadly speaking, composition refers to how privacy properties of algorithms applied multiple times relate to privacy properties of the individual algorithms. Two styles of composition are usually considered -- sequential and parallel.

A privacy definition is said to satisfy {\em{parallel composition}} if the privacy loss obtained by applying multiple algorithms on disjoint datasets is the maximum of the privacy losses of the individual algorithms. In particular, Renyi differential privacy of any order satisfies parallel composition. We show below that so does capacity bounded privacy. 

\begin{theorem}\label{thm:parallelcomposition}
Let $\calH_1,\calH_2$ be two function classes that are convex and translation
  invariant. Let $\calH$ be the function
  class:
  \[ \calH = \{ h_1 + h_2 | h_1 \in \calH_1, h_2 \in \calH_2 \} \]
and let $\Gamma$ be the KL divergence or the Renyi divergence of order $\alpha >
1$. If mechanisms $A$ and $B$ satisfy $(\calH_1, \Gamma)$ and $(\calH_2,
  \Gamma)$ capacity bounded privacy with parameters $\epsilon_1$ and
  $\epsilon_2$ respectively, and if the datasets $D_1$ and $D_2$ are disjoint,
  then the combined release $(A(D_1), B(D_2))$ satisfies $(\calH, \Gamma)$ 
  capacity bounded privacy with parameter $\max(\epsilon_1, \epsilon_2)$. 
\end{theorem}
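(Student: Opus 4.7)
The first reduction uses disjointness: since $D_1$ and $D_2$ are disjoint and a neighboring dataset changes exactly one record, one of $D_1,D_2$ is affected on any neighbor; by symmetry I assume $D_1 \to D_1'$ while $D_2$ is fixed. Let $P_1 = A(D_1)$, $Q_1 = A(D_1')$, and $P_2 = B(D_2)$. Since $A$ and $B$ are run independently, the joint outputs have product distributions $P_1 \times P_2$ and $Q_1 \times P_2$, and the goal reduces to $\Gamma^{\calH}(P_1 \times P_2, Q_1 \times P_2) \leq \epsilon_1$; the symmetric argument handles $\epsilon_2$, yielding the $\max(\epsilon_1,\epsilon_2)$ bound.

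\textbf{KL case.} I would start from the variational form $\dist_{KL}^{\calH}(P,Q) = \sup_{h \in \calH} \bbE_P[h] - \bbE_Q[e^{h-1}]$ and decompose $h = h_1 + h_2$ with $h_i \in \calH_i$. Under the product measure the $f^*$-term factors as $\bbE_{Q_1}[e^{h_1}]\cdot \bbE_{P_2}[e^{h_2-1}]$, but this product prevents the sup from decoupling directly. The key step is to use translation invariance of each $\calH_i$ via two scalar shifts $h_i \to h_i + c_i$: choosing $c_2$ so that $\bbE_{P_2}[e^{h_2+c_2-1}] = 1$ and then $c_1 = -\log \bbE_{Q_1}[e^{h_1}]$ collapses the objective to the Donsker--Varadhan sum $(\bbE_{P_1}[h_1] - \log \bbE_{Q_1}[e^{h_1}]) + (\bbE_{P_2}[h_2] - \log \bbE_{P_2}[e^{h_2}])$. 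The supremum then splits: the $h_1$ piece equals $\dist_{KL}^{\calH_1}(P_1, Q_1) \leq \epsilon_1$, while the $h_2$ piece is $\leq 0$ by Jensen's inequality. Summing yields the desired bound.

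\textbf{Renyi case and main obstacle.} For $\Gamma = \renyidiv$ with $\alpha > 1$, I would pass through the restricted $\alpha$-divergence via~\eqref{eqn:alpharenyi} and aim to prove the multiplicative analog $1 + \alpha(\alpha-1)\dist_\alpha^\calH(P_1 \times P_2, Q_1 \times P_2) \leq (1 + \alpha(\alpha-1)\dist_\alpha^{\calH_1}(P_1, Q_1)) \cdot (1 + \alpha(\alpha-1)\dist_\alpha^{\calH_2}(P_2, P_2))$. Since the pointwise Fenchel--Young bound $h \leq f^*(h)$ forces $\dist_\alpha^{\calH_2}(P_2, P_2) \leq 0$, the second factor is at most $1$, so taking $\log/(\alpha-1)$ gives $\renyidiv^\calH \leq \renyidiv^{\calH_1}(P_1, Q_1) \leq \epsilon_1$. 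The main obstacle is establishing this multiplicative decomposition: unlike the KL case in which $\log \bbE[e^{h_1+h_2}]$ splits additively, the Fenchel conjugate $f^*(s) \propto s^{\alpha/(\alpha-1)}$ for the $\alpha$-divergence is a power function, so $\bbE_{Q_1 \times P_2}[f^*(h_1+h_2)]$ does not factor cleanly across the two coordinates. I would combine translation invariance (to absorb normalizing constants in the optimizer) with convexity of $\calH_1,\calH_2$ (to keep shifted and convex-combined witnesses inside the classes) and a H\"older-type inequality to bound the cross expectation by the product of the marginal ones -- this is precisely where the convexity hypothesis earns its keep. Assembling this product decomposition with the symmetric $D_2$-change case then delivers the final $\max(\epsilon_1, \epsilon_2)$ bound.
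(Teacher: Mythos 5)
Your reduction to a single changed marginal via disjointness is exactly the paper's first step, and your KL argument is correct and complete in outline: shifting $h_1,h_2$ by constants (legal by translation invariance) turns the variational objective into a sum of two Donsker--Varadhan terms, the first bounded by $\KL^{\calH_1}(P_1,Q_1)\leq\epsilon_1$ and the second nonpositive by Jensen. This is actually more elementary than what the paper does, and it does not use convexity of the classes at all.

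The Renyi case, however, is a genuine gap, and you say so yourself: you reduce the theorem to a multiplicative decomposition of $\dist_\alpha^{\calH}$ over the product measure but do not prove it, and the tool you nominate (a H\"older-type inequality bounding a ``cross expectation by the product of the marginal ones'') does not match the actual difficulty, which is lower-bounding $\bbE_{Q_1\otimes P_2}\bigl[|h_1(x)+h_2(y)|^{\alpha/(\alpha-1)}\bigr]$ -- a power of a \emph{sum}, not a product. The paper takes a completely different route here: it invokes the characterization of restricted divergences from Liu et al., $\dist_\alpha^{\calH}(P,Q)=\inf_{P'}\dist_\alpha(P',Q)+\sup_{h\in\calH}\bigl(\bbE_P[h]-\bbE_{P'}[h]\bigr)$ (valid for convex, translation-invariant $\calH$ -- this is where those hypotheses earn their keep, not in a H\"older step), restricts $P'$ to product form, uses the exact multiplicative identity for the \emph{unrestricted} $\alpha$-divergence of product measures so that the $P_1=Q_1$ factor kills the cross terms, and splits the remaining IPM-like supremum additively over $\calH=\calH_1+\calH_2$. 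For what it is worth, your direct variational route can also be closed without that machinery in the parallel setting: since the second marginal is unchanged ($Q_2=P_2$), apply Jensen's inequality in the $y$-coordinate to get $\bbE_y|h_1(x)+h_2(y)|^{\alpha/(\alpha-1)}\geq|h_1(x)+\bbE_{P_2}[h_2]|^{\alpha/(\alpha-1)}$, and absorb the constant $\bbE_{P_2}[h_2]$ into $h_1$ by translation invariance; this collapses the objective to the one-dimensional $\dist_\alpha^{\calH_1}(P_1,Q_1)$. As written, though, the Renyi half of your proposal is a plan rather than a proof.
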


In contrast, a privacy definition is said to compose {\em{sequentially}} if the
privacy properties of algorithms that satisfy it degrade gracefully as the same
dataset is used in multiple private releases. In particular, Renyi differential
privacy is said to satisfy sequential additive composition -- if multiple
private algorithms are used on the same dataset, then their privacy parameters
add up. We show below that a similar result can be shown for $(\calH,
\Gamma)$-capacity bounded privacy when $\Gamma$ is the KL or the Renyi
divergence, and $\calH$ satisfies some mild conditions.

\begin{theorem}\label{thm:sequentialcomposition}
Let $\calH_1$ and $\calH_2$ be two function classes that are convex, translation
  invariant, and that includes a constant function. Let $\calH$ be the function class:
\[ \calH = \{ h_1 + h_2 | h_1 \in \calH_1, h_2 \in \calH_2 \} \]
and let $\Gamma$ be the KL divergence or the Renyi divergence of order $\alpha >
  1$. If mechanisms $A$ and $B$ satisfy $(\calH_1, \Gamma)$ and $(\calH_2,
  \Gamma)$ capacity bounded privacy with parameters $\epsilon_1$ and
  $\epsilon_2$ respectively, then the combined release $(A, B)$ satisfies
  $(\calH, \Gamma)$ capacity bounded privacy with parameter $\epsilon_1 +
  \epsilon_2$.
\end{theorem}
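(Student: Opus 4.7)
The plan is to split into the KL and Renyi cases, since the variational calculus differs. The combined release $(A(D), B(D))$ is a product distribution $P_A \otimes P_B$ (the internal randomness of $A$ and $B$ is independent given the data), so the task reduces to bounding a restricted divergence between two product distributions in terms of restricted divergences of the marginals. In both cases I would analyze the supremum in the variational definition of $\Gamma^{\calH}$ by specializing to $h = h_1 + h_2$ with $h_i \in \calH_i$ and showing that the product structure causes the two pieces to decouple.

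For the KL case, my strategy is to first upgrade the $f^*$-based definition of $\KL^{\calH}$ from equation \ref{eqn:fdivrestricted} to the Donsker--Varadhan form $\KL^{\calH}(P, Q) = \sup_{h \in \calH}[\bbE_P[h] - \log \bbE_Q[e^h]]$. Translation invariance of $\calH_1, \calH_2$ makes $\calH$ translation invariant, so I can shift any $h$ by a constant and optimize over that constant in closed form, converting $\bbE_Q[e^{h-1}]$ into $\log \bbE_Q[e^h]$. Specializing to $h = h_1 + h_2$ over a product distribution causes the first term to split additively, $\bbE_{P_A \otimes P_B}[h_1 + h_2] = \bbE_{P_A}[h_1] + \bbE_{P_B}[h_2]$, and the log-MGF to split as $\log \bbE_{Q_A}[e^{h_1}] + \log \bbE_{Q_B}[e^{h_2}]$. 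The two sups over $h_1, h_2$ decouple, yielding $\KL^{\calH}(P_A \otimes P_B, Q_A \otimes Q_B) = \KL^{\calH_1}(P_A, Q_A) + \KL^{\calH_2}(P_B, Q_B) \leq \epsilon_1 + \epsilon_2$.

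For the Renyi case, via equation \ref{eqn:alpharenyi} it suffices to establish the multiplicative bound $1 + \alpha(\alpha-1)\dist_\alpha^{\calH}(P_A \otimes P_B, Q_A \otimes Q_B) \leq (1 + \alpha(\alpha-1)\dist_\alpha^{\calH_1}(P_A, Q_A))(1 + \alpha(\alpha-1)\dist_\alpha^{\calH_2}(P_B, Q_B))$, since taking $\log$ of both sides and dividing by $\alpha - 1$ converts this into the desired additive bound on $\renyidiv^{\calH}$. I would start from the identity $1 + \alpha(\alpha-1)\dist_\alpha(P, Q) = \sup_{g \geq 0}[\alpha \bbE_P[g^{\alpha-1}] - (\alpha-1)\bbE_Q[g^\alpha]]$, obtained by substituting $h = g^{\alpha-1}/(\alpha-1)$ in the $f^*$ variational form of the $\alpha$-divergence; this is the Renyi analogue of the log-MGF form used above.

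The main obstacle will be in the Renyi case: unlike for KL, the reparameterized test variable $g = ((\alpha-1)(h_1 + h_2))^{1/(\alpha-1)}$ is not a product $g_1(x) g_2(y)$, so the expectations $\bbE_P[g^{\alpha-1}]$ and $\bbE_Q[g^\alpha]$ do not factor over $P_A \otimes P_B$ directly. I expect to get around this by showing, via convexity and translation invariance of $\calH_i$ together with H\"older's inequality, that the sup in the variational formula over $h_1 + h_2$ is dominated by one over suitable product-form test functions, which does factor and reassembles into the two marginal restricted $\alpha$-divergences. Once this multiplicative bound is in hand, combining with the assumed privacy parameters yields $\renyidiv^{\calH} \leq \epsilon_1 + \epsilon_2$ and completes the proof.
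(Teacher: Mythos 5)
Your KL argument is sound and is actually a different (and cleaner) route than the paper's: translation invariance does let you pass to the Donsker--Varadhan form, and over a product distribution the objective for $h=h_1+h_2$ decouples exactly into the two marginal DV objectives, giving $\KL^{\calH}(P_A\otimes P_B, Q_A\otimes Q_B)=\KL^{\calH_1}(P_A,Q_A)+\KL^{\calH_2}(P_B,Q_B)$. The paper does not argue this way; it treats both cases through the $\alpha$-divergence.

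The Renyi half, however, has a genuine gap, and it is exactly at the step you flag as ``the main obstacle.'' After reparameterizing, you need to dominate the supremum over additive test functions $h_1+h_2$ by a supremum over product-form functions so that $\bbE_Q[|h_1+h_2|^{\alpha/(\alpha-1)}]$ factors; you assert this should follow from convexity, translation invariance, and H\"older, but no such domination is established, and it is not clear one exists: for exponents $\alpha/(\alpha-1)\in(1,2)$ and sign-changing $h_i$, neither $|h_1+h_2|^{A}\geq |h_1|^A+|h_2|^A$ nor any useful reverse inequality holds pointwise, and the additive class has no natural map onto a product class. This step is the actual content of the theorem, so the proof is incomplete as proposed. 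The paper avoids the issue entirely by invoking a different characterization of restricted divergences for convex, translation-invariant classes (from Liu et al.): $\dist_\alpha^{\calH}(P,Q)=\inf_{P'}\big[\dist_\alpha(P',Q)+\sup_{h\in\calH}(\bbE_P[h]-\bbE_{P'}[h])\big]$. Restricting the infimum to product distributions $P'=P_1'\otimes P_2'$, the first term tensorizes via the exact multiplicative identity for unrestricted $\alpha$-divergences (yielding the cross term $a\,\eta_1\eta_2$ you need), and the second term splits additively over $\calH=\calH_1+\calH_2$; the constant-function hypothesis guarantees each IPM-like summand is nonnegative, so the marginal bounds $\eta_i$ control both pieces. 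If you want to complete your proof, you should either import this inf-convolution representation or prove the product-domination claim you are currently only conjecturing.
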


The proof relies heavily on the relationship between the restricted and
unrestricted divergences, as shown in~\cite{Liu2018:inductive,
Liu2017:approximation, Farnia2018:gan}, and is provided in the Appendix. Observe
that the conditions on $\calH_1$ and $\calH_2$ are rather mild, and include a
large number of interesting functions. One such example of $\calH$ is the set of
ReLU neural networks with linear output node, a common choice when performing
neural network regression.

The composition guarantees offered by Theorem~\ref{thm:sequentialcomposition}
are non-adaptive -- the mechanisms $A$ and $B$ are known in advance, and
$B$ is not chosen as a function of the output of $A$. Whether fully general
adaptive composition is possible for the capacity bounded definitions is left as
an open question for future work.

\section{Privacy Mechanisms}\label{sec:mechanisms}

The definition of capacity bounded privacy allows for an infinite number of
variations, corresponding to the class of adversaries $\calH$ and divergences
$\Gamma$, exploring all of which is outside the scope of a single paper. For the
sake of concreteness, we consider {\em{linear}} and (low-degree)
{\em{polynomial}} adversaries for $\calH$ and KL as well as Renyi divergences 
of order $\alpha$ for $\gamma$. These correspond to cases where a linear or a 
low-degree polynomial function is used by an adversary to attack privacy.


A first sanity check is to see what kind of linear or polynomial guarantee is offered
by a mechanism that directly releases a non-private value (without any added
randomness). This mechanism offers no finite linear KL or Renyi differential
privacy parameter -- which is to be expected from any sensible privacy
definition (see Lemma~\ref{lemma:non-private} in the Appendix).

\begin{table}

  \begin{tabular}[pos]{|l|l|p{55mm}|p{35mm}|}
  \hline
  Divergence & Mechanism & Privacy Parameter, Linear Adversary  & Privacy
  Parameter, Unrestricted \\ \hline
  KL & Laplace 
     & $ \sqrt{1+\epsilon^2}-1 +
       \log\left(1-\frac{(\sqrt{1+\epsilon^2}-1)^2}{\epsilon^2}\right) $
     & $ \epsilon-1 + e^{-\epsilon} $ \\ \hline
  KL & Gaussian & $ \nicefrac{1}{2\sigma^2} $ & $ \nicefrac{1}{2\sigma^2} $ \\ \hline
  $\alpha$-Renyi & Laplace 
    & $ \leq \frac{1}{\alpha-1}\log(1+2^{\alpha-1}\epsilon^\alpha)$
    & $ \geq \epsilon - \nicefrac{\log(2)}{\alpha-1}$ \\ \hline
  $\alpha$-Renyi & Gaussian 
    & $ \leq \frac{1}{\alpha-1}
      \log(1+\nicefrac{\sqrt{2\pi}^{\alpha-1}}{\sigma^\alpha})$
    & $ \nicefrac{\alpha}{2\sigma^2} $ \\ \hline
  $\alpha$-Renyi & Laplace, $d$-dim 
    & $\leq \frac{1}{\alpha-1} \log ( 1 + 2^{d(\alpha-1)} (\epsilon 
    \|v\|_\alpha)^\alpha )$ 
    & $\geq \epsilon\|v\|_1 - \nicefrac{d \log(2)}{\alpha-1}$ \\ \hline
  $\alpha$-Renyi & Gaussian, $d$-dim 
    & $\leq \frac{1}{\alpha-1} \log \left( 1 + \frac{2^{d(\alpha-1)}
    \sqrt{\pi/2}^{\alpha-1} \|v\|_\alpha^\alpha}{\sigma^\alpha} \right)$ 
    & $ \frac{\alpha \|v\|_2^2}{2\sigma^2} $ \\ \hline
\end{tabular}
  \vspace{1em}
\caption{Privacy parameters of different mechanisms and divergences with a
  linear adversary and unrestricted. Proofs appear in
  Appendix~\ref{sec:mechanisms-app}} \label{tab:divergences}
\end{table}

We now look at the capacity bounded privacy properties of the familiar Laplace
and Gaussian mechanisms which form the building blocks for much of differential
privacy. Bounds we wish to compare appear in Table~\ref{tab:divergences}.

\paragraph{Laplace Mechanism.} The Laplace mechanism
adds $Lap(0, 1/\epsilon)$ noise to a function with
global sensitivity $1$. In $d$ dimensions, the mechanism adds
$d$ i.i.d. samples from $Lap(0, 1/\epsilon)$ to a function with $L_1$
sensitivity 1. More generally, we consider functions whose global sensitivity
along coordinate $i$ is $v_i$. We let $v = (v_1,v_2,\ldots, v_d)$.

Table~\ref{tab:divergences} shows $(lin,KL)$-capacity bounded privacy and 
KL-DP parameters for the Laplace mechanism. The former has a slightly smaller
parameter than the latter.

Table~\ref{tab:divergences} also contains an upper bound on the $(lin, Renyi)$
capacity bounded privacy, and a lower bound on the Renyi-DP. The exact value
of the Renyi-DP is:
  \begin{equation}\label{eq:renyi-laplace}
    \frac{1}{\alpha-1}\log\left(\left(\frac{1}{2} + \frac{1}{4\alpha-2}\right)
    e^{(\alpha-1)\epsilon} +
    \left(\frac{1}{2} - \frac{1}{4\alpha-2}\right)e^{-\alpha\epsilon}\right)
  \end{equation}
By multiplying by $\alpha-1$ and exponentiating, we see that the $(lin,Renyi)$
upper bound grows with $1+\epsilon(2\epsilon)^{\alpha-1}$, while the Renyi-DP
lower bound grows with $(e^\epsilon)^{\alpha-1}$. This means
no matter what $\epsilon$ is, a moderately-sized $\alpha$ will make the
$(lin,Renyi)$ upper bound smaller than the Renyi lower bound.

Figure~\ref{fig:lin-renyi-laplace} plots the $(lin,Renyi)$ upper bound,
(\ref{eq:renyi-laplace}), and the exact value of the $(lin,Renyi)$ parameter,
 as functions of $\alpha$ when $\epsilon=1$. 
We see the exact $(lin,Renyi)$ is always better
than~(\ref{eq:renyi-laplace}), although the upper bound may sometimes be worse.
The upper bound overtakes the lower bound when $\alpha \approx 3.3$.

For the multidimensional Laplace Mechanism, the story is the same.
The $(lin, Renyi)$ upper bound can now be thought of as a function of $\epsilon
\|v\|_\alpha$, and the $Renyi$ lower bound a function of $\epsilon \|v\|_1$.
Because $\|v\|_\alpha \leq \|v\|_1$, we can replace $\|v\|_\alpha$ with
$\|v\|_1$ in the $(lin, Renyi)$ upper bound, and repeat the analysis for the
unidimensional case.
Notice that our $(lin, Renyi)$ upper bound is slightly better than using
composition $d$ times on the unidimensional Laplace mechanism which would 
result in a multiplicative factor of $d$.

Figure~\ref{fig:poly-renyi-laplace} contains plots of the exact
$(poly, Renyi)$ paramters for degree 1,2, and 3 polynomials, as functions of
$\alpha$ when $\epsilon=1$. As we expect, as
the polynomial complexity increases, the $(poly, Renyi)$ parameters
converge to the Renyi-DP parameter. This also provides an
explanation for the counterintuitive observation
that the $(poly, Renyi)$ parameters
eventually decrease with $\alpha$. The polynomial function classes are too
simple to distinguish the two distributions for larger $\alpha$, 
but their ability to do so
increases as the polynomial complexity increases.

\begin{figure}
  \begin{subfigure}[t]{0.48\textwidth}
    \centering
    \includegraphics[scale=0.4]{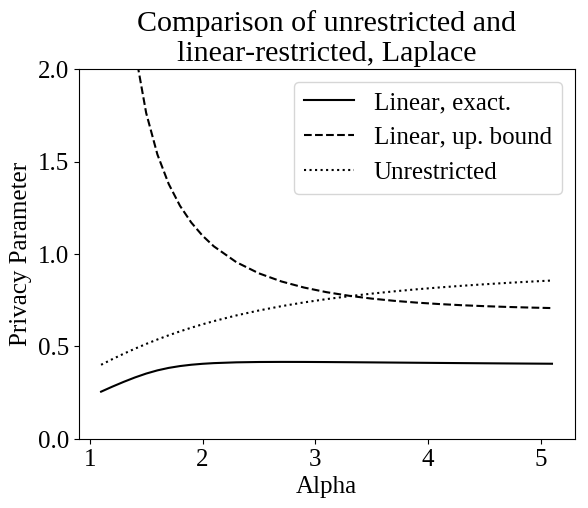}
    \caption{Plots of $(lin,Renyi)$ capacity bounded DP 
      and Renyi-DP parameters for Laplace mechanism when $\epsilon=1$. For
      $(lin,Renyi)$, the upper bound and exact value
      are shown. }\label{fig:lin-renyi-laplace}
  \end{subfigure}
  \hfill
  \begin{subfigure}[t]{0.48\textwidth}
    \centering
    \includegraphics[scale=0.4]{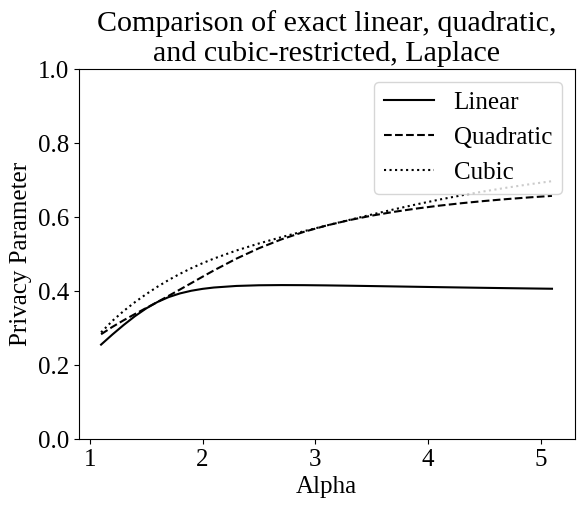}
    \caption{Comparison of exact values of $(poly, Renyi)$ capacity bounded DP
    parameters for Laplace mechanism when $\epsilon=1$.
    }\label{fig:poly-renyi-laplace}
  \end{subfigure}
\end{figure}

\paragraph{Gaussian Mechanism.} The Gaussian mechanism
adds $\calN(0, \sigma^2)$ noise to a function with global sensitivity $1$. In
$d$ dimensions, the mechanism adds $\calN(0, \sigma^2I_d)$ to a function with
$L_2$ sensitivity 1. More generally, we consider functions whose global
sensitivity along coordinate $i$ is $v_i$ We let $v = (v_1,v_2,\ldots,v_d)$.

Whereas the $(lin, \KL)$ parameter for Laplace
is a little better than the KL-DP parameter, 
Table~\ref{tab:divergences} shows the Gaussian mechanism has the same
parameter. This is because if $P$ and $Q$ are two Gaussians with equal variance, the
function $h$ that maximizes the variational formulation corresponding to the
KL-divergence is a linear function.

For Renyi capacity bounded privacy, the observations we make are nearly
identical to that of the Laplace Mechanism. The reader is referred to 
Appendix~\ref{sec:app-gauss} for plots and specific details.

\subsubsection*{Matrix Mechanism.}
Now, we show how to use the bounds in Table~\ref{tab:divergences} to obtain
better capacity bounded parameters for a composite mechanism often used in
practice: the Matrix mechanism~\cite{Li2010:matrix,Li2012:matrix,
McKenna2018:matrix}.
The Matrix mechanism is a very general method of computing linear queries on a
dataset, usually with less error than the Laplace Mechanism. Given a dataset $D \in
\Sigma^m$ over a finite alphabet $\Sigma$ of size $n$, we can form a vector of
counts $x \in\bbR^{n}$ such that $x_i$ contains how many times $i$ appears in $D$.
A linear query is a vector $w \in \bbR^{n}$ and has answer $w^Tx$.
A set of $d$ linear queries can then be given by a matrix $W \in \bbR^{d \times
n}$ with the goal of computing $Wx$ privately.

A naive way to do this is to use the Laplace Mechanism in $d$ dimensions to
release $x$ and then multiply by $W$. The key
insight is that, for any $A \in \bbR^{s \times n}$ of rank $n$,
we can instead add noise 
to $Ax$ and multiply the result by $WA^\dag$. Here, $A^\dag$ denotes the
pseudoinverse of $A$ such that $WA^\dag A = W$.

The
Laplace Mechanism arises as the special case $A = I$; however, more carefully 
chosen $A$s may be used to get privacy with less noise. This gives 
rise to the Matrix mechanism:
\[
  M_A(W,x,\epsilon) = 
  WA^\dag(Ax + \|A\|_1Lap_d(0,1/\epsilon))
\]
Here, $\|A\|_1$ is the maximum $L_1$-norm of any column of $a$.
Prior work shows that this mechanism provides differential privacy and suggest
different methods for picking an $A$. Regardless of which $A$ is chosen,
, we are able to provide a capacity-bounded privacy parameter that is better
than any known Renyi-DP analysis has shown:

\begin{theorem}[Matrix Mechanism]\label{thm:matrix-mechanism}
  Let $x \in \bbR^n$ be a data vector, $W \in \bbR^{d \times n}$ be a query matrix,
  and $A \in \bbR^{s \times n}$ be a strategy matrix. Then, 
  releasing $M_A(W,x,\epsilon)$ offers $(lin,Renyi)$ capacity bounded privacy 
  with parameter at most
  $\frac{1}{\alpha-1}\log(1+2^{s(\alpha-1)}\epsilon^\alpha)$.
\end{theorem}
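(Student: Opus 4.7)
My plan is to split $M_A(W, x, \epsilon)$ into a noise-adding step followed by a fixed linear post-processing, invoke \Cref{thm:postprocessing} to reduce to analyzing the noise-adding step, and then read off the required bound from the multidimensional Laplace row of \Cref{tab:divergences}.

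Concretely, I would write $M_A(W, x, \epsilon) = g(Y(x))$, where $Y(x) := Ax + \|A\|_1 \cdot Lap_s(0, 1/\epsilon) \in \bbR^s$ and $g(y) := W A^\dag y$ is a fixed linear map from $\bbR^s$ to $\bbR^d$. Taking $\calG$ to be linear maps $\bbR^s \to \bbR^d$ and $\calH, \calI$ to be linear functionals on $\bbR^s$ and $\bbR^d$ respectively, the composition of any $i \in \calI$ with $g$ is again linear, so $i \circ g \in \calH$; hence \Cref{thm:postprocessing} implies that any $(lin, \renyidiv)$-capacity bounded privacy guarantee for the mechanism $x \mapsto Y(x)$ transfers, with the same parameter, to $M_A(W, x, \epsilon)$.

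To bound the privacy of $Y$, I would view it as an $s$-dimensional Laplace mechanism applied to the statistic $x \mapsto Ax$. Absorbing the scalar $\|A\|_1$ into the noise rescales the inverse-scale parameter to $\epsilon' := \epsilon/\|A\|_1$. For neighboring $x, x'$ differing in a single coordinate $j$, the shift between the two output distributions is exactly $\pm A_{\cdot, j}$, the $j$-th column of $A$, so the sensitivity vector in the table bound may be taken to be $v = A_{\cdot, j}$. Applying the multidimensional Laplace entry with $d = s$ and effective parameter $\epsilon'$ yields
\[
  \renyidiv^{lin}(Y(x), Y(x')) \;\leq\; \frac{1}{\alpha - 1} \log\!\bigl(1 + 2^{s(\alpha - 1)} (\epsilon' \, \|A_{\cdot, j}\|_\alpha)^\alpha\bigr).
\]

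To finish, I would use monotonicity of $\ell_p$-norms in $p$ (for $\alpha \geq 1$) together with the definition of $\|A\|_1$ as the maximum column $L_1$-norm to obtain $\|A_{\cdot, j}\|_\alpha \leq \|A_{\cdot, j}\|_1 \leq \|A\|_1$, so that $\epsilon' \, \|A_{\cdot, j}\|_\alpha \leq \epsilon$ uniformly in $j$. Substituting this into the previous display and taking the supremum over neighbors $x, x'$ gives exactly $\frac{1}{\alpha - 1} \log(1 + 2^{s(\alpha - 1)} \epsilon^\alpha)$. The main obstacle is really just careful bookkeeping around the table entry: its statement is phrased in terms of a per-coordinate ``global sensitivity vector,'' so one must verify that the same argument applies with the specific shift $A_{\cdot, j}$ induced by the Matrix mechanism, and then bound its $\ell_\alpha$-norm by $\|A\|_1$. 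Once that step is pinned down, the remainder is direct substitution together with the one-line post-processing reduction above.
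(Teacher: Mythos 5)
Your proposal is correct and follows essentially the same route as the paper's proof: decompose $M_A$ into the $s$-dimensional Laplace release of $Ax$ with noise scale $\|A\|_1/\epsilon$ followed by the fixed linear map $WA^\dag$, invoke the multidimensional Laplace bound (Corollary~\ref{thm:linearbound-coro}) with sensitivity vector equal to a column of $A$, bound $\|v\|_\alpha \le \|v\|_1 \le \|A\|_1$, and finish by post-processing via Theorem~\ref{thm:postprocessing} with $\calG,\calH,\calI$ the appropriate classes of linear maps. Your bookkeeping with $\epsilon' = \epsilon/\|A\|_1$ and the per-column shift $A_{\cdot,j}$ is, if anything, slightly more explicit than the paper's.
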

Note this is the same upper bound as the $d$-dimensional Laplace mechanism;
indeed, the proof works by applying post-processing to the Laplace mechanism.

\section{Algorithmic Generalization}
\label{sec:generalization}

Overfitting to input data has long been the curse of many statistical and machine learning methods; harmful effects of overfitting can range from poor performance at deployment time all the way up to lack of reproducibility in scientific research due to $p$-hacking~\cite{head2015extent}. Motivated by these concerns, a recent line of work in machine learning investigates properties that algorithms and methods should possess so that they can automatically guarantee generalization~\cite{Russo2015:generalization, Feldman2017:generalization, Dwork2014, Smith2015}. In this connection, differential privacy and many of its relaxations have been shown to be highly successful; it is known for example, that if adaptive data analysis is done by a differentially private algorithm, then the results automatically possess certain generalization guarantees.

A natural question is whether these properties translate to the capacity bounded differential privacy, and if so, under what conditions. We next investigate this question, and show that capacity bounded privacy does offer promise in this regard. A more detailed investigation is left for future work. 

\paragraph{Problem Setting.} More specifically, the problem setting is as follows.~\cite{Russo2015:generalization, Feldman2017:generalization, Dwork2014, Smith2015}. We are given as input a data set $S = \{ x_1, \ldots, x_n \}$ drawn from an (unknown) underlying distribution $D$ over an instance space $\calX$, and a set of ``statistical queries'' $\calQ$; each statistical query $q \in \calQ$ is a function $q: \calX \rightarrow [0, 1]$. 

A data analyst $M$ observes $S$, and then picks a query $q_S \in \calQ$ based on her observation; we say that $M$ {\em{generalizes well}} if the query $q_S$ evaluated on $S$ is close to $q_S$ evaluated on a fresh sample from $D$ (on expectation); more formally, this happens when the {\em{generalization gap}} $ \frac{1}{n} \sum_{i=1}^{n} q_S(x_i) - \bbE_{x \sim D}[q_S(x)]$
is low. 

Observe that if the query was picked without an apriori look at the data $S$, then the problem would be trivial and solved by a simple Chernoff bound. Thus bounding the generalization gap is challenging because the choice of $q_S$ depends on $S$, and the difficulty lies in analyzing the behaviour of particular methods that make this choice. 


\paragraph{Our Result.} Prior work in generalization theory~\cite{Russo2015:generalization,
Feldman2017:generalization, Dwork2014, Smith2015} shows that if $M$ possesses certain algorithmic stability properties -- such as differential privacy as well as many of its relaxations and generalizations -- then the gap is low. We next show that provided the adversarial function class $\calH$ satisfies certain properties with respect to the statistical query class $\calQ$, $(\calH, \lin)$-capacity bounded privacy also has good generalization properties. 

\begin{theorem}[Algorithmic Generalization]
\label{thm:generalize}
Let $S$ be a sample of size $n$ drawn from an underlying data distribution $D$ over an instance space $\calX$, and let $M$ be a (randomized) mechanism that takes as input $S$, and outputs a query $q_S$ in a class $\calQ$. For any $x \in \calX$, define a function $h_x: \calQ \rightarrow [0, 1]$ as: $h_x(q) = q(x)$, and let $\calH$ be any class of functions that includes $\{ h_x  | x \in \calX \}$. 

If the mechanism $M$ satisfies $(\calH, \KL)$-capacity bounded privacy with parameter $\epsilon$, then, for every distribution $D$, we have: $\Big{|} \bbE_{S \sim D, M}\left(\frac{1}{n} \sum_{i=1}^{n} q_S(x_i) - \bbE_{x \sim D}[q_S(x)]\right) \Big{|} \leq 8 \sqrt{\epsilon}.$
\end{theorem}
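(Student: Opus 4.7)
The plan is to combine the classical ``ghost sample'' symmetrization argument with a Pinsker-style inequality for the $\calH$-restricted KL divergence.

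First, I would introduce, for each index $i$, an independent fresh draw $x_i' \sim D$, and let $S^{(i)}$ denote $S$ with its $i$-th entry replaced by $x_i'$. Because the $n+1$ points $x_1, \ldots, x_n, x_i'$ are iid, the pair $(S, x_i')$ has the same joint distribution as $(S^{(i)}, x_i)$; hence $\bbE_{S, M, x \sim D}[q_{M(S)}(x)] = \bbE[q_{M(S^{(i)})}(x_i)]$. Averaging over $i$ rewrites the generalization gap as
\[ \frac{1}{n}\sum_{i=1}^n \bbE\bigl[q_{M(S)}(x_i) - q_{M(S^{(i)})}(x_i)\bigr]. \]
Since $S$ and $S^{(i)}$ differ in a single data point, capacity bounded privacy gives $\KL^{\calH}(M(S), M(S^{(i)})) \leq \epsilon$ almost surely. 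Conditioning on $(S, S^{(i)}, x_i)$, the function $h_{x_i}(q) = q(x_i)$ lies in $\calH$ by assumption and is $[0,1]$-valued, so the inner difference equals $\bbE_{q \sim M(S)}[h_{x_i}(q)] - \bbE_{q \sim M(S^{(i)})}[h_{x_i}(q)]$.

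The next step is to invoke a restricted Pinsker-type inequality,
\[ |\bbE_P[h] - \bbE_Q[h]| \leq 8 \sqrt{\KL^{\calH}(P, Q)} \quad \text{for any } [0,1]\text{-valued } h \in \calH, \]
which is precisely the sort of information-geometric statement the authors highlight as a contribution of the paper. Applied pointwise with $P = M(S)$, $Q = M(S^{(i)})$, and $h = h_{x_i}$, this bounds each summand (pre-expectation) by $8\sqrt{\epsilon}$. Taking outer expectations, applying the triangle inequality to the $n$-term average, and noting that the bound is uniform in $(S, S^{(i)}, x_i)$ delivers $|\text{gap}| \leq 8\sqrt{\epsilon}$.

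The main obstacle is the restricted Pinsker step. The variational form of $\KL^{\calH}$ directly controls expressions $\bbE_P[h] - \bbE_Q[e^{h-1}]$ rather than $\bbE_P[h] - \bbE_Q[h]$, so no Pinsker bound is available for free. A natural route is to use closure of $\calH$ under small affine rescalings: plugging $\lambda(h - c)$ into the variational form and optimizing in $\lambda$, together with the sub-Gaussianity of $h - c$ implied by $h \in [0,1]$, yields the square-root dependence. An alternative is to push $P$ and $Q$ forward through $h$ to the real line, apply the classical Pinsker inequality to the one-dimensional pushforwards, and then argue that the restricted KL dominates the pushforward KL. Either route gives the needed $O\bigl(\sqrt{\KL^{\calH}(P, Q)}\bigr)$ bound, from which careful accounting of constants recovers the factor $8$.
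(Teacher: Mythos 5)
Your symmetrization step is exactly the paper's: replace $x_i$ by a fresh draw, observe that $(S,x_i')$ and $(S^{(i)},x_i)$ are exchangeable, reduce the gap to $\bbE_{q\sim M(S)}[h_{x_i}(q)]-\bbE_{q\sim M(S^{(i)})}[h_{x_i}(q)]$, and bound this by $\ipm^{\calH}(M(S),M(S^{(i)}))$ using $h_{x_i}\in\calH$. You have also correctly identified that the entire weight of the theorem rests on a restricted Pinsker inequality $\ipm^{\calH}(P,Q)\leq 8\sqrt{\KL^{\calH}(P,Q)}$, which the paper isolates as Theorem~\ref{thm:pinsker}. Where you diverge is in how that inequality is proved. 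The paper does \emph{not} work inside the variational form at all: it invokes the decomposition of \cite{Liu2018:inductive, Farnia2018:gan}, $\KL^{\calH}(P,Q)=\inf_{\tilde P}\bigl[\KL(\tilde P,Q)+\sup_{h\in\calH}(\bbE_P[h]-\bbE_{\tilde P}[h])\bigr]$ (valid for $\calH$ convex and translation invariant), applies classical Pinsker to the unrestricted $\KL(\tilde P,Q)$ term, and finishes with a triangle inequality for IPMs; the constant $8$ comes from this chaining. Your first route---plugging $\lambda(h-c)$ into the variational form and using Hoeffding's lemma on the $[0,1]$-valued $h$---is a genuinely different, Donsker--Varadhan-style argument, and where it applies it is actually sharper (it yields $|\bbE_P[h]-\bbE_Q[h]|\leq\sqrt{\KL^{\calH}(P,Q)/2}$). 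But it requires $\calH$ to contain all positive rescalings $\lambda h+b$, a hypothesis that appears neither in the theorem statement nor in Theorem~\ref{thm:pinsker} (whose functions are confined to range $[-1,1]$, which is incompatible with unbounded rescaling). To be fair, the paper's own proof also silently imports the convexity/translation-invariance/negation-closure hypotheses of Theorem~\ref{thm:pinsker}, so some strengthening of ``$\calH$ includes $\{h_x\}$'' is unavoidable on either route; but you should state your closure assumption explicitly rather than leave it as ``a natural route.''

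Your second proposed route for the Pinsker step does not work. The inequality $\KL^{\calH}(P,Q)\geq\KL(h_{\#}P,h_{\#}Q)$ would require the variational supremum over $\calH$ to dominate the supremum over all functions of the form $u\circ h$ with $u$ arbitrary, i.e.\ it needs $u\circ h\in\calH$ for every real function $u$. For the restricted classes this paper cares about (e.g.\ linear functions), that fails, and indeed the whole point of the restricted divergence is that it can be far smaller than any unrestricted (or pushforward) KL; the data processing inequality only gives $\KL(h_{\#}P,h_{\#}Q)\leq\KL(P,Q)$ for the \emph{unrestricted} divergence, which you do not control. Drop that alternative and commit to either the rescaling argument (with the added closure hypothesis) or the paper's inf-decomposition.
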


We remark that the result would not hold for arbitrary $(\calH, \KL)$-capacity bounded privacy, and a condition that connects $\calH$ to $\calQ$ appears to be necessary. However, for specific distributions $D$, fewer conditions may be needed. 

Observe also that Theorem~\ref{thm:generalize} only provides a bound on expectation. Stronger guarantees -- such as high probability bounds as well as adaptive generalization bounds -- are also known in the adaptive data analysis literature. While we believe similar bounds should be possible in our setting, proving them requires a variery of information-theoretic properties of the corresponding divergences, which are currently not available for restricted divergences. We leave a deeper investigation for future work.




\paragraph{Proof Ingredient: A Novel Pinsker-like Inequality.} We remark that an ingredient in the proof of Theorem~\ref{thm:generalize} is a novel Pinsker-like inequality for restricted KL divergences, which was previously unknown, and is presented below (Theorem~\ref{thm:pinsker}). We believe that this theorem may be of independent interest, and may find applications in the theory of generative adversarial networks, where restricted divergences are also used. 

We begin by defining an integral probability metric (IPM)~\cite{Gretton2009} with respect to a function class $\calH$. 

\begin{definition}
Given a function class $\calH$, and any two distributions $P$ and $Q$, the Integral Probability Metric (IPM) with respect to $\calH$ is defined as follows: $\ipm^{\calH}(P, Q) = \sup_{h \in \calH} | \bbE_P[h(x)] - \bbE_Q[h(x)] |.$
\end{definition}

Examples of IPMs include the total variation distance where $\calH$ is the class of all functions with range $[0,1]$, and the Wasserstein distance where $\calH$ is the class of all $1$-Lipschitz functions. With this definition in hand, we can now state our result. 

\begin{theorem}[Pinsker-like Inequality for Restricted KL Divergences]
Let $\calH$ be a convex class of functions with range $[-1,1]$ that is translation invariant and closed under negation. Then, for any $P$ and $Q$ such that $P$ is absolutely continuous with respect to $Q$, we have that: $\ipm^{\calH}(P, Q) \leq 8 \cdot \sqrt{\KL^{\calH}(P, Q)}.$
\label{thm:pinsker}
\end{theorem}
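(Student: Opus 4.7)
The plan is to exploit a restricted Donsker--Varadhan style variational form of $\KL^{\calH}$, combined with Hoeffding's lemma applied to a shrunken version of an approximate IPM-maximizer. Fix $\eta > 0$ and pick $h \in \calH$ with $|\bbE_P[h] - \bbE_Q[h]| \geq \ipm^{\calH}(P,Q) - \eta$. By closure under negation we may assume $\Delta := \bbE_P[h] - \bbE_Q[h] \geq 0$. Since $\calH$ is convex and contains both $h$ and $-h$, the segment $\{(2\lambda-1)h : \lambda \in [0,1]\}$ lies in $\calH$, so $t h \in \calH$ for every $t \in [-1,1]$.

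Next I would lift the Fenchel form of the restricted KL divergence (with $f^{*}(s) = e^{s-1}$) into a Donsker--Varadhan form using translation invariance. Concretely, for any $g \in \calH$ the class also contains $g + c$ for all constants $c$, so
\begin{equation*}
  \KL^{\calH}(P,Q) \;\geq\; \sup_{c \in \bbR}\bigl[\bbE_P[g + c] - \bbE_Q[e^{g+c-1}]\bigr] \;=\; \bbE_P[g] - \log \bbE_Q[e^{g}],
\end{equation*}
where the inner optimum over $c$ is attained at $c = 1 - \log \bbE_Q[e^{g}]$. Apply this with $g = t h$, $t \in [0,1]$. Since $h$ has range $[-1,1]$, Hoeffding's lemma gives $\log \bbE_Q[e^{t h}] \leq t\, \bbE_Q[h] + t^{2}/2$, hence
\begin{equation*}
  \KL^{\calH}(P,Q) \;\geq\; t\Delta - \tfrac{t^{2}}{2}.
\end{equation*}

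Finally I would optimize the right side over $t \in [0,1]$. If $\Delta \leq 1$, setting $t = \Delta$ yields $\KL^{\calH}(P,Q) \geq \Delta^{2}/2$, so $\Delta \leq \sqrt{2\KL^{\calH}(P,Q)}$. If $\Delta > 1$, setting $t = 1$ yields $\KL^{\calH}(P,Q) \geq 1/2$, hence $\sqrt{\KL^{\calH}(P,Q)} \geq 1/\sqrt{2}$, which combined with the trivial bound $\Delta \leq 2$ gives $\Delta \leq 2\sqrt{2}\sqrt{\KL^{\calH}(P,Q)}$. In either case $\Delta \leq 2\sqrt{2}\sqrt{\KL^{\calH}(P,Q)} \leq 8\sqrt{\KL^{\calH}(P,Q)}$, and sending $\eta \to 0$ on the left gives the theorem.

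The main obstacle will be justifying the passage from the restricted Fenchel form to the restricted Donsker--Varadhan form: the optimization over $c$ is unbounded, so one must interpret ``translation invariance'' as genuine closure under adding arbitrary real constants (and reconcile this with the range-$[-1,1]$ hypothesis, which is naturally read as a constraint on the test functions entering the IPM rather than on every element of $\calH$). A secondary technical point is the boundary case $\Delta > 1$, where the unconstrained optimizer $t = \Delta$ exits the segment on which $th \in \calH$ and one must instead patch together the $t = 1$ bound with the a~priori bound $\Delta \leq 2$; the constant $8$ in the theorem is comfortably loose enough to absorb both cases.
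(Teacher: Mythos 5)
Your proof is correct, and it takes a genuinely different route from the paper's. The paper's proof invokes the representation $\KL^{\calH}(P,Q) = \inf_{\tilde P}\KL(\tilde P,Q) + \sup_{h\in\calH}\bbE_P[h]-\bbE_{\tilde P}[h]$ from Liu et al., applies classical Pinsker to the $\KL(\tilde P,Q)$ term, rewrites the remainder as $\ipm^{\calH}(P,\tilde P)$ using closure under negation, and finishes with the triangle inequality for IPMs; the successive relaxations (and the tacit assumption that the infimum is attained) are what produce the constant $8$. You instead work directly with the variational definition: translation invariance converts the Fenchel form into the Donsker--Varadhan form $\bbE_P[g]-\log\bbE_Q[e^{g}]$ (the same optimization over the additive constant carried out in Lemma~\ref{lem:kldivdual}), convexity plus negation-closure lets you scale the near-optimal IPM witness by $t\in[0,1]$, and Hoeffding's lemma on the bounded test function yields $\KL^{\calH}(P,Q)\geq t\Delta-t^{2}/2$. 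This is the classical sub-Gaussian proof of Pinsker transplanted to the restricted setting: it is more self-contained (no appeal to the inf-representation or to attainment of the infimum), and it gives the sharper constant $2\sqrt{2}$, recovering the optimal Pinsker constant when $\calH$ is the full class of $[-1,1]$-valued functions. The tension you flag between ``range $[-1,1]$'' and ``translation invariant'' is real, but it is equally present in the paper's own hypotheses and proof (the range bound is used there to get $\ipm^{\calH}(P,\tilde P)\leq 2$); under the natural reading---$\calH$ generated by translates of a convex, negation-closed, $[-1,1]$-valued base class, with the IPM witness drawn from the base class---both arguments go through, so this is an issue with the theorem statement rather than a gap in your argument.
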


This theorem is an extended version of the Pinsker Inequality, which states that the total variation distance $TV(P, Q) \leq \sqrt{2 \KL(P, Q)}$; however, instead of connecting the total variation distance and KL divergences, it connects IPMs and the corresponding restricted KL divergences. 

\section{Conclusion}
\label{sec:conclusion}

We initiate a study into capacity bounded differential privacy -- a relaxation
of differential privacy against adversaries in restricted function classes. We
show how to model these adversaries cleanly through the recent framework of
restricted divergences. We then show that the definition satisfies
privacy axioms, and permits mechanisms that have higher utility (for the same
level of privacy) than regular KL or Renyi differential privacy when the
adverary is limited to linear functions. Finally, we show some preliminary
results that indicate that these definitions offer good generalization
guarantees. 

There are many future directions. A deeper investigation into novel mechanisms
that satisfy the definitions, particularly for other function classes such as
threshold and relu functions remain open. A second question is a more detailed
investigation into statistical generalization -- such as generalization in high
probability and adaptivity -- induced by these notions. Finally, our work
motivates a deeper exploration into the information geometry of adversarial
divergences, which is of wider interest to the community.

\subsubsection*{Acknowledgments.} We thank Shuang Liu and Arnab Kar for early discussions. KC and JI thank ONR under N00014-16-1-261, UC Lab Fees under LFR 18-548554 and NSF under 1253942 and 1804829 for support. AM was supported by the National Science Foundation under grants 1253327, 1408982; and by DARPA and SPAWAR under contract N66001-15-C-4067.
{\small
\bibliographystyle{plain}
\bibliography{capacity}
}

\clearpage
\appendix

\section{Analysis of Gaussian Mechanism}\label{sec:app-gauss}
\begin{figure}[h]
  \begin{subfigure}[t]{0.48\textwidth}
    \centering
    \includegraphics[scale=0.4]{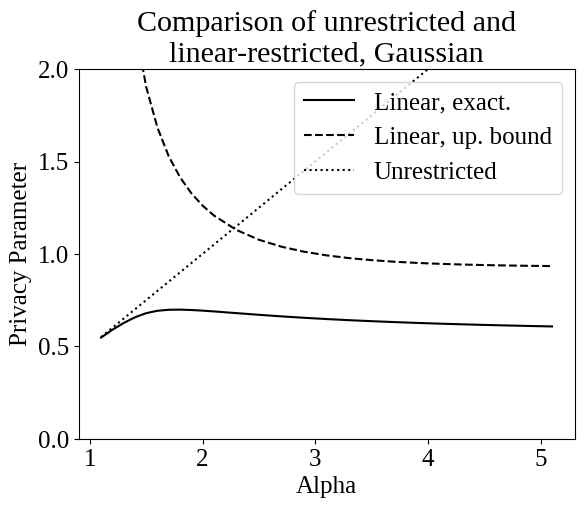}
    \caption{Plots of $(lin,Renyi)$ capacity bounded DP and Renyi-DP parameters
    for Gaussian mechanism when $\sigma=1$. For $(lin,Renyi)$, the upper bound
    and exact value are shown.}\label{fig:lin-renyi-normal}
  \end{subfigure}
  \hfill
  \begin{subfigure}[t]{0.48\textwidth}
    \centering
    \includegraphics[scale=0.4]{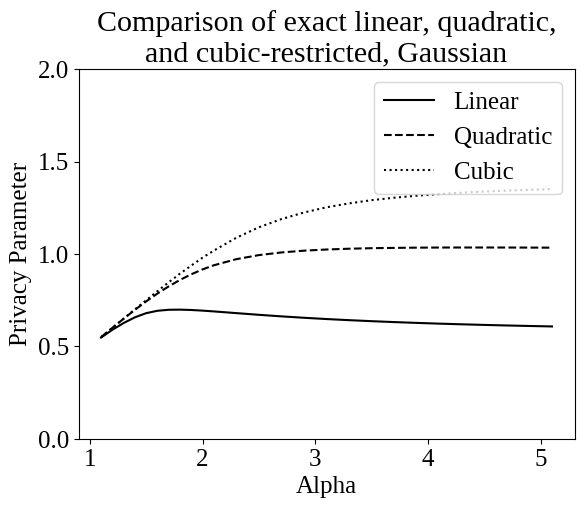}
    \caption{Comparison of exact values of $(poly, Renyi)$ capacity bounded DP
    parameters for Gaussian mechanism when $\sigma=1$.}\label{fig:poly-renyi-normal}
  \end{subfigure}
\end{figure}

Table~\ref{tab:divergences} contains an upper bound on the $(lin, Renyi)$
capacity bounded privacy parameter for the Gaussian mechanism and the 
Renyi-DP parameter.
By multiplying by $\alpha-1$ and exponentiating, we see that the $(lin, Renyi)$
upper bound grows with
$\frac{1}{\sigma}\left(\frac{\sqrt{2\pi}}{\sigma}\right)^{\alpha-1}$ while the Renyi
parameter grows with $(e^{\alpha/(2\sigma^2)})^{\alpha-1}$. Because
$\frac{\sqrt{2\pi}}{\sigma} < e^{\alpha/(2\sigma^2)}$ for all $\sigma$, we
conclude that modestly-sized values of $\alpha$ will cause the $(lin, Renyi)$ 
upper bound to fall below the Renyi parameter for all $\sigma$.

Figure~\ref{fig:lin-renyi-normal} plots the $(lin,Renyi)$ upper bound, the exact
$(lin,Renyi)$ parameter, and the Renyi-DP parameter as functions of $\alpha$
when $\sigma=1$. We see the exact $(lin,Renyi)$ is always better than
the Renyi-DP parameter, although the upper bound is worse for small $\alpha$.
The upper bound overtakes the Renyi-DP parameter when $\alpha \approx 2.3$.

For the multidimensional Gaussian mechanism, the story is mostly the same. 
Note that the $(lin, Renyi)$ upper bound can be written as
\[
    \frac{1}{\alpha-1} \log \left( 1 + 2^{d(\alpha-1)}
    \sqrt{\pi/2}^{\alpha-1} \left(
      \frac{\|v\|_\alpha}{\sigma}\right)^\alpha \right)
\]
The Renyi parameter, on the other hand, is
\[
  \alpha \left(\frac{\|v\|_2}{\sigma}\right)^2
\]
Notice that these are the same functions we looked at for the unidimensional
case, but instead of $\frac{1}{\sigma}$, they are in terms of
$\frac{\|v\|_\alpha}{\sigma}$ and $\frac{\|v\|_2}{\sigma}$, respectively.
Indeed this is no accident, because the unidimensional cases assumed the $L_2$
sensitivity of the function was 1. However, when $\alpha > 2$, we have
$\|v\|_\alpha < \|v\|_2$, so we can replace $\|v\|_\alpha$ with $\|v\|_2$, and
the $(lin, Renyi)$ upper bound will only increase. But this still gives us the
same conclusion as the unidimensional Gaussian mechanism, since now both parameters are
functions of $\frac{\|v\|_2}{\sigma}$.

Finally, our multidimensional $(lin, Renyi)$ upper bound is slightly better than
composing the Gaussian mechanism $d$ times which would result in a
multiplicative factor of $d$.

Figure~\ref{fig:poly-renyi-normal} contains plots of the exact $(poly, Renyi)$
parameters for degree 1,2, and 3 polynomials, as functions of $\alpha$ when
$\sigma=1$. As we expect, as the polynomial complexity increases, the $(poly,
Renyi)$ parameters converge to the Renyi-DP parameter. This also provides an
explanation for the counterintuitive observation that the $(poly, Renyi)$
parameters eventually decrease with $\alpha$. The polynomial function classes
are too simple to distinguish the two distributions for larget $\alpha$, but
their ability to do so increases as the polynomial complexity increases.

\section{Post-Processing, Convexity, and Composition}\label{sec:app-properties}

\begin{proof} (Of Theorem~\ref{thm:postprocessing})
It suffices to show Post-Processing Invariance for a restricted $f$-divergence.
Let $A$ be a mechanism that maps a dataset $D$ into an output $x$ in an instance space $X$. Let $P = \Pr(A(D) = \cdot)$ and $Q = \Pr(A(D') = \cdot)$. 

Suppose $g$ is a function in $\calG$ which maps an $x \in X$ into a $y \in Y$ --
  that is $y = g(x)$. Let $P'$ and $Q'$ be the distributions induced on $Y$ by
  $P$ and $Q$ respectively when we map $x$ into $y$. To show post-processing,
   we need to show that $\dist_f^{\calI}(P', Q') \leq \dist_f^{\calH}(P, Q)$.

To see this, observe that:
\begin{eqnarray*}
  \dist_f^\calI(P', Q') & = & \sup_{i \in \calI} \bbE_{P'}[i(y)] - \bbE_{Q'}[f^*(i(y)] \\
  & = & \sup_{i \in \calI} \bbE_P[i \cdot g(x)] - \bbE_Q[f^* (i \cdot g(x))] \\
  & \leq & \sup_{h \in \calH} \bbE_P[h(x)] - \bbE_Q[f^*(h(x))]
\end{eqnarray*}
where the first step follows because $y = g(x)$ and the second step follows
  because $i \cdot g \in \calH$. The theorem follows from observing that the
  right hand side in the third line is exactly $\dist_f^\calH(P, Q)$.
\end{proof}

\begin{proof} (Of Theorem~\ref{thm:convexity})
  To prove convexity, it suffices to show that 
  \[ \dist_{\alpha}^\calH(M(D), M(D')) \leq \lambda \dist_{\alpha}^{\calH}(A(D), A(D'))
  + (1 - \lambda) \dist_{\alpha}^{\calH}(B(D), B(D')) \]
	Observe that $\dist_f^{\calH}(M(D), M(D'))$ is equal to:
	\begin{eqnarray*}
		& = & \sup_{h \in \calH} \bbE_{x \sim M(D)} [h(x)] - \bbE_{x \sim
    M(D')}[f^*(h(x))] \\
		& = & \sup_{h \in \calH}  \bbE_{x \sim \lambda A(D) + (1 - \lambda) B(D) }
    [h(x)] - \bbE_{x \sim \lambda A(D') + (1 - \lambda) B(D')} [f^*(h(x))] \\
		& = & \sup_{h \in \calH} \lambda \bbE_{x \sim A(D)} [h(x)] + (1 - \lambda)
    \bbE_{x \sim B(D)} [ h(x)] - 
      \bbE_{x \sim \lambda A(D') + (1 - \lambda) B(D')} [f^*(h(x))] \\
		& \leq & \sup_{h \in \calH} \lambda \bbE_{x \sim A(D)} [h(x)] + (1 -
    \lambda) \bbE_{x \sim B(D)} [ h(x)] - \lambda \bbE_{x \sim A(D')}
    [f^*(h(x))] - (1 - \lambda) \bbE_{x \sim B(D')}[f^*(h(x))] \\
		& = & \sup_{h \in \calH} \lambda (\bbE_{x \sim A(D)} [h(x)] - \bbE_{x
    \sim A(D')} [f^*(h(x))] ) + (1 - \lambda)  (\bbE_{x \sim B(D)} [h(x)] -
    \bbE_{x \sim B(D')} [f^*(h(x))] )\\
		& \leq & \lambda \cdot \sup_{h \in \calH} \bbE_{x \sim A(D)} [h(x)] -
    \bbE_{x \sim A(D')} [f^*(h(x))] + (1 - \lambda) \cdot  \sup_{h \in \calH}
    \bbE_{x \sim B(D)} [h(x)] - \bbE_{x \sim B(D')} [f^*(h(x))] 
	\end{eqnarray*}
	
	where the second step follows from the fact that $M(D)$ is a mixture of $A(D)$
  and $B(D)$ with mixing weights $[\lambda, 1 - \lambda]$, the third step
  is a property of mixture
  distributions, the fourth step from the concavity of log, the fifth step from
  re-arrangement, and the last step from the observation that $\max_y f(y) +
  g(y) \leq \max_y f(y) + \max_y g(y)$. Observe that the last line is $\lambda
  \dist_f^{\calH}(A(D), A(D')) + (1 - \lambda) \dist_f^{\calH}(B(D), B(D'))$.
  \end{proof}

\subsection{Composition ($\calH$-bounded Renyi, KL Privacy only)}

\begin{proof}
  (Of Theorem~\ref{thm:sequentialcomposition}).
  Let $D$ and $D'$ be two datasets that differ in the private value of a single
  person, and let $(P_1,P_2) = (A(D), B(D))$ and $(Q_1, Q_2) = (A(D'), B(D'))$.
  Let $P$ be the product distribution $P_1 \otimes P_2$ and $Q$
  be the product distribution $Q_1 \otimes Q_2$.
  Finally, let $a = \alpha^2-\alpha$.
  By assumption, $\renyidiv^{\calH_i}(P_i, Q_i) \leq \epsilon_i$. Hence,
  we know $\dist_\alpha^{\calH_i}(P_i, Q_i) \leq \eta_i$ where 
  $\eta_i = \frac{\exp(\epsilon_i(\alpha-1))-1}{a}$.
  Then,
\begin{eqnarray*}
  \dist_{\alpha}^{\calH}(P, Q) & = & \inf_{P'} \dist_\alpha(P', Q) + \sup_{h \in
  \calH}\bbE_{P}[h] - \bbE_{P'}[h]\\
  & \leq & \inf_{P'=P_1' \otimes P_2'} \dist_\alpha(P', Q) + \sup_{h \in
  \calH}\bbE_{P}[h] - \bbE_{P'}[h]\\
  & = & \inf_{P'=P_1' \otimes P_2'} a\dist_\alpha(P_1', Q_1)\dist_\alpha(P_2', Q_2) +
  \dist_\alpha(P_1', Q_1) + \dist_\alpha(P_2', Q_2) + \sup_{h \in
  \calH}\bbE_{P}[h] - \bbE_{P'}[h]\\
  & \leq & \inf_{P'=P_1' \otimes P_2'} a\dist_\alpha(P_1', Q_1)\dist_\alpha(P_2', Q_2) +
  \dist_\alpha(P_1', Q_1) + \dist_\alpha(P_2', Q_2) \\
  &   & + \sup_{h_1 \in \calH_1}\bbE_{P_1}[h_1] - \bbE_{P_1'}[h_1]
      + \sup_{h_2 \in \calH_2}\bbE_{P_2}[h_2] - \bbE_{P_2'}[h_2]\\
\end{eqnarray*}
Here the first step follows from~\cite{Liu2018:inductive}, the second step from
  restricting $P'$ to be a product distribution, third from the multiplicative 
  property of $\alpha$-divergence for product distributions, fourth from the fact that
  we can split the sup of the product distributions into two parts. To simplify
  further, we use ~\cite{Liu2018:inductive} again, this time on the assumptions:
\begin{eqnarray*}
  \dist_\alpha^{\calH_1}(P_1, Q_1)& = &\inf_{P_1'}\dist_\alpha(P_1', Q_1) + 
  \sup_{h \in \calH_1} \bbE_{P_1}[h] - \bbE_{P_1'}[h] \leq \eta_1 \\
  \dist_\alpha^{\calH_2}(P_2, Q_2)& = &\inf_{P_2'}\dist_\alpha(P_2', Q_2) + 
  \sup_{h \in \calH_2} \bbE_{P_2}[h] - \bbE_{P_2'}[h] \leq \eta_2
\end{eqnarray*}
  Because $h$ contains constant functions, we know that $\sup_{h \in
  \calH}\bbE_{P_i}[h] - \bbE_{P_i'}[h] \geq 0$, and thus
  \[
    \dist_\alpha^{\calH_i}(P_i, Q_i) \leq \eta_i
  \]
Continuing, 
\begin{eqnarray*}
  \dist_{\alpha}^\calH(P, Q)
  &\leq& \inf_{P'=P_1' \otimes P_2'} a\eta_1\eta_2 +
  \dist_\alpha(P_1', Q_1) + \dist_\alpha(P_2', Q_2) \\
  &   & + \sup_{h_1 \in \calH_1}\bbE_{P_1}[h_1] - \bbE_{P_1'}[h_1]
      + \sup_{h_2 \in \calH_2}\bbE_{P_2}[h_2] - \bbE_{P_2'}[h_2]\\
  &\leq& a\eta_1\eta_2 + \inf_{P'=P_1' \otimes P_2'} 
  \dist_\alpha(P_1', Q_1) + \sup_{h_1 \in \calH_1}\bbE_{P_1}[h_1] - \bbE_{P_1'}[h_1]
  \\
  &   & + \dist_\alpha(P_2', Q_2)
        + \sup_{h_2 \in \calH_2}\bbE_{P_2}[h_2] - \bbE_{P_2'}[h_2]\\
  &\leq& a\eta_1\eta_2 + \eta_1 + \eta_2
\end{eqnarray*}
  This means $\renyidiv^\calH(P,Q) \leq
  \frac{1}{\alpha-1}\log(a(a\eta_1\eta_2 + \eta_1 + \eta_2) + 1)$. We can
  simplify this:
\begin{align*}
  \frac{1}{\alpha-1}\log(a(a\eta_1\eta_2 + \eta_1 + \eta_2) + 1) &=
  \frac{1}{\alpha-1}(\log(a\eta_1 + 1) + \log(a\eta_2+1)) \\
  &= \epsilon_1 + \epsilon_2
\end{align*}
\end{proof}

\begin{proof}
  (Of Theorem~\ref{thm:parallelcomposition}).
  Let $D$ and $D'$ be two datasets which differ in the value of a single row.
  Then, $D=(D_1,D_2)$, and we have two cases for $D'$: $D' = (D_1, D_2')$ or
  $(D_1', D_2)$ where the pairs $D_1,D_1'$ and $D_2,D_2'$ differ in one row.
  Suppose the first case is true. Then,
  $(P_1,P_2) = (A(D_1), B(D_2))$ and $(Q_1,Q_2) = (A(D_1), B(D_2'))$.
  Importantly, we have $P_1 = Q_1$. Then, letting $P = P_1 \otimes P_2$, $Q =
  Q_1 \otimes Q_2$, and $a = \alpha^2-\alpha$,
\begin{eqnarray*}
  \dist_{\alpha}^{\calH}(P, Q) & = & \inf_{P'} \dist_\alpha(P', Q) + \sup_{h \in
  \calH}\bbE_{P}[h] - \bbE_{P'}[h]\\
  & \leq & \inf_{P'=P_1 \otimes P_2'} \dist_\alpha(P', Q) + \sup_{h \in
  \calH}\bbE_{P}[h] - \bbE_{P'}[h]\\
  & = & \inf_{P'=P_1 \otimes P_2'} a\dist_\alpha(P_1, Q_1)\dist_\alpha(P_2', Q_2) +
  \dist_\alpha(P_1, Q_1) + \dist_\alpha(P_2', Q_2) + \sup_{h \in
  \calH}\bbE_{P}[h] - \bbE_{P'}[h]\\
  & = & \inf_{P'=P_1 \otimes P_2'} 
  \dist_\alpha(P_2', Q_2) + \sup_{h \in \calH}\bbE_P[h] - \bbE_{P'}[h] \\
  & \leq & \inf_{P_2'} \dist_\alpha(P_2', Q_2) +
  \sup_{h_1 \in \calH_1}\bbE_{P_1}[h_1] - \bbE_{P_1}[h_1]
      + \sup_{h_2 \in \calH_2}\bbE_{P_2}[h_2] - \bbE_{P_2'}[h_2]\\
  & = & \inf_{P_2'} \dist_\alpha(P_2', Q_2) +
      \sup_{h_2 \in \calH_2}\bbE_{P_2}[h_2] - \bbE_{P_2'}[h_2]\\
  & = & \dist_\alpha^{\calH_2}(P_2,Q_2)
\end{eqnarray*}
Here the first step follows from~\cite{Liu2018:inductive}, the second step from
  restricting $P'$ to be a product distribution, third from the multiplicative 
  property of $\alpha$-divergence for product distributions, fourth from the
  fact that $D(P_1,Q_1) = 0$ when $P_1=Q_1$ for any divergence, fifth from splitting the
  sup into two parts, sixth from further cancellation.
  For the second case, where $D' = (D_1,D_2')$, we can prove 
  $\dist_\alpha^\calH(P,Q) \leq \dist_\alpha^{\calH_1}(P_1,Q_1)$ via a similar
  argument. With a simple transformation from $\alpha$ to
  $\alpha$-Renyi divergence, we obtain our result.
\end{proof}

\section{Mechanisms}\label{sec:mechanisms-app}
\subsection{KL, Unbounded}

\begin{theorem}[Laplace Mechanism under KL]\label{thm:lap-kl}
Let $P$ and $Q$ be Laplace distributions with mean $0$ and $1$ and parameter $1/\epsilon$. Then,
\[ \KL(P, Q) = \epsilon - 1 + e^{-\epsilon} \]
\end{theorem}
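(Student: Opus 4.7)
The plan is to write out the KL divergence as an explicit integral against the Laplace density for $P$ and exploit the fact that the log-ratio $\log(p/q)$ simplifies to a piecewise linear function of $x$. Concretely, with $p(x) = \tfrac{\epsilon}{2} e^{-\epsilon |x|}$ and $q(x) = \tfrac{\epsilon}{2} e^{-\epsilon |x-1|}$, I have $\log(p(x)/q(x)) = \epsilon(|x-1| - |x|)$, so $\KL(P,Q) = \epsilon \cdot \bbE_{x \sim P}[|x-1| - |x|]$. The bulk of the work is to evaluate this expectation.

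Next I would split the real line into the three regions determined by the kinks of $|x-1|-|x|$, namely $x \le 0$, $0 \le x \le 1$, and $x \ge 1$, on which the integrand equals $+1$, $1-2x$, and $-1$, respectively. On $x \le 0$ and $x \ge 1$ the computation reduces to the tail masses of $P$: $\Pr_P(x \le 0) = 1/2$ and $\Pr_P(x \ge 1) = \tfrac{1}{2} e^{-\epsilon}$ (since on $[1,\infty)$ the density is $\tfrac{\epsilon}{2} e^{-\epsilon x}$). The middle region requires the explicit integral $\int_0^1 x \cdot \tfrac{\epsilon}{2} e^{-\epsilon x}\, dx$, which I would handle by one integration by parts.

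Assembling the three contributions gives $\bbE_P[|x-1| - |x|] = 1 - \tfrac{1}{\epsilon}(1 - e^{-\epsilon})$, and multiplying by $\epsilon$ yields $\epsilon - 1 + e^{-\epsilon}$, as claimed. There is no real obstacle here; the only step that might be slightly error-prone is keeping the signs straight in the piecewise expression for $|x-1|-|x|$ and collecting all the $e^{-\epsilon}$ terms across the three regions, so I would double-check by verifying that the answer is nonnegative and vanishes to second order in $\epsilon$ as $\epsilon \to 0$ (indeed $\epsilon - 1 + e^{-\epsilon} = \tfrac{\epsilon^2}{2} - \tfrac{\epsilon^3}{6} + \cdots$), which is the expected behavior for KL between two nearby distributions.
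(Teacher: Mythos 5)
Your proposal is correct and follows essentially the same route as the paper's proof: both split $\mathbf{R}$ into the regions $x\le 0$, $0\le x\le 1$, $x\ge 1$, use the constant log-ratio on the two outer regions together with the tail masses $1/2$ and $\tfrac12 e^{-\epsilon}$, and evaluate the middle region by integration by parts. Writing the log-ratio as $\epsilon(|x-1|-|x|)$ up front is a minor algebraic repackaging of the same computation, and your final assembly checks out.
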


\begin{proof}

We divide the real line into three intervals: $I_1 = [-\infty, 0]$, $I_2 = [0, 1]$, $I_3 = [1, \infty]$.

For any $x \in I_1$, $P(x)/Q(x) = e^{\epsilon}$, and $\Pr(I_1) = 1/2$ (under $P$). Therefore,
\[ \bbE_P[ \log (P/Q), x \in I_1 ] = \frac{1}{2} \epsilon \]

Similarly for any $x \in I_3$, $P(x)/Q(x) = e^{-\epsilon}$, and $\Pr(I_3)$ under $P$ is calculated as follows:
\[ \Pr(I_3) = \int_{1}^{\infty} \frac{1}{2} \epsilon e^{-\epsilon x} dx = \frac{1}{2} e^{-\epsilon} \]
Therefore,
\[ \bbE_P[ \log (P/Q), x \in I_3 ] = -\frac{1}{2} \epsilon e^{-\epsilon} \]

We are now left with $I_2$. For any $x \in I_2$, we have $P(x)/Q(x) = e^{-\epsilon x} / e^{-\epsilon(1 - x)} = e^{\epsilon(1 - 2x)}$. Therefore,
\begin{eqnarray*}
\bbE_P[ \log (P/Q), x \in I_2] & = & \int_{0}^{1} \frac{1}{2} \epsilon^2 (1 - 2 x) e^{- \epsilon x} dx \\
& = &  \frac{1}{2} \epsilon^2 \left( \int_{0}^{1} e^{-\epsilon x} dx - \int_{0}^{1} 2x e^{-\epsilon x} dx \right) \\
& = &  \frac{1}{2} \epsilon^2 \left( \frac{e^{-\epsilon x}}{-\epsilon} \Bigg{|}_0^1 - \frac{2x e^{-\epsilon x}}{-\epsilon} \Bigg{|}_0^1 + \int_{0}^{1} \frac{2 e^{-\epsilon x}}{-\epsilon} \Bigg{|}_0^1 dx \right) \\
& = & \frac{1}{2} \epsilon^2 \left( \frac{1 - e^{-\epsilon}}{\epsilon} + \frac{2 e^{-\epsilon}}{\epsilon} - \int_{0}^{1} 2 e^{-\epsilon x}{\epsilon} dx \right) \\
& = & \frac{1}{2} \epsilon^2 \left(  \frac{1 + e^{-\epsilon}}{\epsilon} - \frac{2 e^{-\epsilon x}}{-\epsilon^2} \Bigg{|}_0^1 \right) \\
& = & \frac{1}{2} \epsilon^2 \left(  \frac{1 + e^{-\epsilon}}{\epsilon} + \frac{2 e^{-\epsilon} - 2}{\epsilon^2} \right)
\end{eqnarray*}

Summing up the three terms, we get:
\[ \bbE_P[\log (P/Q)] = \frac{1}{2} \epsilon -\frac{1}{2} \epsilon e^{-\epsilon} + \frac{1}{2} \epsilon (1 + e^{-\epsilon}) + e^{-\epsilon} - 1 = \epsilon - 1 + e^{-\epsilon} \]

\end{proof}
The proof for the Gaussian Mechanism appears in Theorem~\ref{thm:gaussian-kl}.

\subsection{KL, Linear-Bounded}

\begin{lemma}\label{lem:kldivdual}
	Let $\calX$ be an instance space and $\phi$ be a vector of feature functions on
  $\calX$ of length $M$. 
  Let $\lin$ be the class of functions:
	\[ \lin = \{ a \phi(x) + b | a \in \bbR^M, b \in \bbR \} \]
	Then, for any two distributions $P$ and $Q$ on $\calX$, we have:
  \begin{align*}
    \KL^{\calH}(P, Q) &= \sup_{a \in \bbR^M} a^{\top} \bbE_{x \sim P}[\phi(x)] - 
    \bbE_{x \sim Q} [e^{a^{\top-1}\phi(x) }] \\
    \KL^{\lin}(P, Q) &= \sup_{a \in \bbR^M} a^{\top} \bbE_{x \sim P}[\phi(x)] - 
    \log \bbE_{x \sim Q} [e^{a^{\top}\phi(x) }] 
  \end{align*}
\end{lemma}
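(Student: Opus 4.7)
The plan is to derive both identities directly from the variational representation of $\KL$ given in Equation~\eqref{eqn:fdivrestricted}, specializing the supremum to the linear class in two different ways. Recall that $\KL$ corresponds to the $f$-divergence with $f(t)=t\log t$, whose Fenchel conjugate is $f^*(s)=e^{s-1}$. Thus for any function class $\calH'$,
\[
  \KL^{\calH'}(P,Q) \;=\; \sup_{h\in\calH'} \bbE_{x\sim P}[h(x)] - \bbE_{x\sim Q}\bigl[e^{h(x)-1}\bigr].
\]
The two equations in the lemma differ only in whether we allow the additive bias $b$ in the linear parametrization, so they will follow by plugging $h(x)=a^\top\phi(x)$ (first equation, for the class $\calH$ of bias-free linear functions in $\phi$) and $h(x)=a^\top\phi(x)+b$ (second equation, for $\lin$) into this formula.

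For the first equation, I would substitute $h(x)=a^\top\phi(x)$ directly; the resulting expression is exactly the right-hand side of the first display (reading the typo as $e^{a^\top\phi(x)-1}$), so no further work is required beyond invoking the variational identity.

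For the second equation, after substituting $h(x)=a^\top\phi(x)+b$, the objective becomes
\[
  a^\top\bbE_P[\phi(x)] + b - e^{b-1}\,\bbE_Q\!\left[e^{a^\top\phi(x)}\right].
\]
The key step is to carry out the inner supremum over $b\in\bbR$ in closed form: for each fixed $a$, setting the derivative with respect to $b$ to zero gives $e^{b-1}=1/\bbE_Q[e^{a^\top\phi(x)}]$, i.e.\ $b^\star=1-\log \bbE_Q[e^{a^\top\phi(x)}]$. Substituting this back collapses the constants $b^\star$ and $-e^{b^\star-1}\bbE_Q[e^{a^\top\phi(x)}]=-1$ into zero, yielding the Legendre-type expression $a^\top\bbE_P[\phi(x)]-\log\bbE_Q[e^{a^\top\phi(x)}]$ to be supremized over $a\in\bbR^M$, which is exactly the claimed form.

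The only mild obstacle is justifying that optimizing over $b$ first is legitimate (this is immediate because $b$ ranges over all of $\bbR$ and the inner problem is concave in $b$, so the joint supremum factorizes) and confirming the boundary case $\bbE_Q[e^{a^\top\phi(x)}]=\infty$, in which case both sides of the claimed identity evaluate to the appropriate limit (the objective tends to $-\infty$ for that $a$, so such $a$ contribute nothing to either supremum). With these observations the two equalities follow by pure algebra from the variational form, so I would not expect the write-up to require more than a few lines.
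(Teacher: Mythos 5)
Your proposal matches the paper's proof essentially step for step: both compute the Fenchel conjugate $f^*(s)=e^{s-1}$ for $f(t)=t\log t$, substitute the linear parametrization $h(x)=a^\top\phi(x)+b$ into the variational form, optimize over $b$ in closed form to get $b^\star = 1-\log\bbE_Q[e^{a^\top\phi(x)}]$, and substitute back to obtain the log-partition expression. Your additional remarks on concavity in $b$ and the case $\bbE_Q[e^{a^\top\phi(x)}]=\infty$ are correct and only tighten what the paper leaves implicit.
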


\begin{proof}
  For KL-divergence, we have $f(x) = x\log(x)$. This means
  \begin{align*}
    f^*(s) = \sup_{x \in \bbR} xs - x\log x
  \end{align*}
  The argument is maximized when $x = e^{s-1} $, so $f^*(s) = e^{s-1}$, and we
  obtain
  \[
  \KL^\calH(P, Q) = \sup_{h \in \calH} \bbE_{x \sim P} [h(x)] - \bbE_{x \sim
    Q}[e^{h(x)-1}]
  \]

  Now, we plug $\lin$ into $\calH$:
	\[ \KL^{\lin}(P, Q) = \sup_{a \in \bbR^M, b \in \bbE} a^{\top} \bbE_{x \sim P}[\phi(x)] + b - \bbE_{x \sim Q} [e^{a^{\top} \phi(x) + b - 1}] \]
	Differentiating the objective with respect to $b$, we have that at the optimum: 
	\[ 1 - e^{b - 1} \bbE_{x \sim Q} [e^{a^{\top} \phi(x)}] = 0, \]
	which means that the optimum $b$ is equal to:
	\[ b = 1 - \log \bbE_{x \sim Q} [e^{a^{\top} \phi(x)}] \]
	Plugging this in the objective, we get that:
	\begin{eqnarray*}
		\KL^{\calH}(P, Q) & = & \sup_{a \in \bbR^M} a^{\top} \bbE_{x \sim P} [\phi(x)] + 1 - \log \bbE_{x \sim Q} [e^{a^{\top} \phi(x)}] + (\bbE_{x \sim Q} [e^{a^{\top} \phi(x)}])^{-1} \bbE_{x \sim Q} [e^{a^{\top} \phi(x)}] \\
		&  = &  \sup_{a \in \bbR^M} a^{\top} \bbE_{x \sim P} [\phi(x)] - \log \bbE_{x \sim Q} [e^{a^{\top} \phi(x)}] 
	\end{eqnarray*}
	The lemma follows.
\end{proof}

\begin{theorem}[Laplace mechanism under $(lin, \KL)$]\label{thm:lap-linkl}
Let $P = Lap(0, 1/\epsilon)$ and $Q = Lap(1, 1/\epsilon)$. Then, 
\[ \KL^\lin(P, Q) = \log\left( 1 - \left( \frac{1 - \sqrt{1 + \epsilon^2}}{\epsilon} \right)^2 \right) + \sqrt{1 + \epsilon^2} - 1 \]
\end{theorem}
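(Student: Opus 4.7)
The plan is to apply Lemma~\ref{lem:kldivdual} with the one-dimensional identity feature $\phi(x) = x$ on $\calX = \bbR$, so that $\lin$ is the class of affine functions $x \mapsto ax + b$. The lemma then reduces the restricted divergence to the scalar optimization
\[
\KL^{\lin}(P, Q) \;=\; \sup_{a \in \bbR} \Big( a\, \bbE_P[x] \;-\; \log \bbE_Q[e^{ax}] \Big).
\]
First I note that $\bbE_P[x] = 0$ by symmetry of $\mathrm{Lap}(0, 1/\epsilon)$ about $0$, which kills the linear term. Second, since $Q = \mathrm{Lap}(1, 1/\epsilon)$, a translation together with the standard Laplace MGF gives $\bbE_Q[e^{ax}] = \frac{\epsilon^2 e^{a}}{\epsilon^2 - a^2}$ for $|a| < \epsilon$ and $+\infty$ otherwise, so the supremum is automatically confined to the open interval $(-\epsilon, \epsilon)$.

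Substituting these two facts into the variational formula reduces the problem to maximizing the single-variable function
\[
f(a) \;=\; -a \;+\; \log\!\left(1 - \frac{a^2}{\epsilon^2}\right)
\]
on $(-\epsilon, \epsilon)$. Setting $f'(a) = -1 - \frac{2a}{\epsilon^2 - a^2} = 0$ yields the quadratic $a^2 - 2a - \epsilon^2 = 0$, with roots $1 \pm \sqrt{1+\epsilon^2}$. A short sign check shows that $1 + \sqrt{1+\epsilon^2} > \epsilon$ lies outside the feasible interval, while $a^{*} := 1 - \sqrt{1+\epsilon^{2}}$ satisfies $|a^{*}| = \sqrt{1+\epsilon^{2}} - 1 < \epsilon$ (square both sides to verify). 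Since $f(a) \to -\infty$ as $a \to \pm \epsilon$, this interior critical point is the unique maximizer.

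Finally I plug $a^{*}$ back into $f$. The linear term contributes $-a^{*} = \sqrt{1+\epsilon^{2}} - 1$, and the log term is $\log\!\big(1 - ((1-\sqrt{1+\epsilon^{2}})/\epsilon)^{2}\big)$ after noting $(a^{*})^{2} = (1 - \sqrt{1+\epsilon^{2}})^{2}$. Adding these recovers the claimed expression exactly. The only real obstacle is the calculus step of identifying which root of the quadratic lies in the feasibility region $(-\epsilon, \epsilon)$ where the MGF of $Q$ is finite; the rest of the argument is a direct application of the dual formula from Lemma~\ref{lem:kldivdual} combined with the closed-form Laplace MGF.
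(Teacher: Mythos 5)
Your proposal is correct and follows essentially the same route as the paper: invoke Lemma~\ref{lem:kldivdual} with $\phi(x)=x$, use $\bbE_P[x]=0$ and the Laplace MGF to reduce to maximizing $\log(1-a^2/\epsilon^2)-a$ on $(-\epsilon,\epsilon)$, and solve the resulting quadratic $a^2-2a-\epsilon^2=0$, selecting the root $1-\sqrt{1+\epsilon^2}$. Your explicit verification that $|1-\sqrt{1+\epsilon^2}|<\epsilon$ and that the interior critical point is the global maximizer is slightly more careful than the paper's, but the argument is the same.
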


\begin{proof}
Recall that the density function of $P$ (resp. $Q$) is
  $\frac{\epsilon}{2}e^{-\epsilon |x|}$ (resp. $\frac{\epsilon}{2} e^{-\epsilon
  | x - 1|}$). By Lemma~\ref{lem:kldivdual}, computing the linear KL is equivalent to solving the following problem:
\[ \max_a a \bbE_{x \sim P}[x] - \log \bbE_{x \sim Q}[e^{ax}] = \max_a - \log \left( \frac{e^{a}}{1 - a^2/\epsilon^2} \right) , a \in [ - \epsilon, \epsilon ] = \max_a \log (1 - a^2/\epsilon^2) - a, a \in [-\epsilon, \epsilon] \]
Note that the expression $\bbE_{x \sim Q}[e^{ax}]$ blows up to $\infty$ for $a \notin [-\epsilon, \epsilon]$, and hence the maximizer $a$ has to lie in $[-\epsilon, \epsilon]$. Taking the derivative and setting it to $0$, we get:
\[ \frac{-2a/\epsilon^2}{1 - a^2/\epsilon^2} - 1 = 0, \]
which, after some algebra, becomes the quadratic equation:
\[ a^2 - 2a - \epsilon^2 = 0 \]
The roots of this equation are: $a = 1 \pm \sqrt{1 + \epsilon^2}$. The first
  root is more than $\epsilon$, and hence we choose $a = 1 - \sqrt{1 +
  \epsilon^2}$ as the solution. Plugging this solution into the expression for
  $\KL^\lin$, we get:
\[ \KL^\lin(P, Q) = \log\left( 1 - \left( \frac{1 - \sqrt{1 + \epsilon^2}}{\epsilon} \right)^2 \right) + \sqrt{1 + \epsilon^2} - 1 \]
\end{proof}

\begin{theorem}[Gaussian mechanism under (lin)-KL]\label{thm:gaussian-kl}
	Let $P = \calN(\mu_1, \sigma_1^2)$ and $Q = \calN(\mu_2, \sigma_2^2)$. Then,
	\[ \KL^\lin(P, Q) = \frac{(\mu_1 - \mu_2)^2}{2\sigma_2^2} \]
\end{theorem}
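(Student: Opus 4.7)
The plan is to apply Lemma~\ref{lem:kldivdual} with the one-dimensional feature map $\phi(x)=x$, which makes the class $\lin$ consist of all affine functions $ax+b$. The variational characterization then reduces to
\[
  \KL^\lin(P,Q)
  \;=\;
  \sup_{a\in\bbR}\;\Bigl(a\,\bbE_{x\sim P}[x]\;-\;\log \bbE_{x\sim Q}[e^{ax}]\Bigr),
\]
so the whole problem collapses to a one-dimensional optimization in the parameter $a$.

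First I would plug in the Gaussian moment generating function. For $Q=\calN(\mu_2,\sigma_2^2)$, the standard computation gives $\bbE_Q[e^{ax}]=\exp(a\mu_2+\tfrac{1}{2}a^2\sigma_2^2)$, so $\log\bbE_Q[e^{ax}]=a\mu_2+\tfrac{1}{2}a^2\sigma_2^2$, and $\bbE_P[x]=\mu_1$. The objective therefore becomes
\[
  F(a)\;=\;a(\mu_1-\mu_2)\;-\;\tfrac{1}{2}a^2\sigma_2^2,
\]
which is a concave quadratic in $a$ (since $\sigma_2^2>0$).

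Next I would solve $F'(a)=0$ to get the unique maximizer $a^\star=(\mu_1-\mu_2)/\sigma_2^2$, and substitute back to obtain $F(a^\star)=(\mu_1-\mu_2)^2/(2\sigma_2^2)$, which is exactly the claimed value. Since $F$ is a concave quadratic on all of $\bbR$, the supremum is attained and finite, so no extra domain-of-validity checks are needed (in contrast to the Laplace proof of Theorem~\ref{thm:lap-linkl}, where one had to check the MGF remained finite).

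There is essentially no hard step here; the ``main obstacle,'' if any, is noticing the qualitative consequence: the variance $\sigma_1^2$ of $P$ plays no role. This is because only the first moment of $P$ enters the variational form when restricted to affine test functions, so any $P$ sharing the mean $\mu_1$ would yield the same $(lin,\KL)$ value. It also gives an independent confirmation of the claim made in the discussion of Table~\ref{tab:divergences} that for the Gaussian mechanism, the optimal $h$ in the variational formula is already linear, so the $(lin,\KL)$ and full $\KL$ parameters coincide (specializing to $\sigma_1=\sigma_2=\sigma$ and $|\mu_1-\mu_2|=1$ recovers the $1/(2\sigma^2)$ entry).
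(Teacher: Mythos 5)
Your proposal is correct and follows essentially the same route as the paper's proof: invoke the variational form from Lemma~\ref{lem:kldivdual}, substitute the Gaussian moment generating function, and maximize the resulting concave quadratic in $a$ to obtain $(\mu_1-\mu_2)^2/(2\sigma_2^2)$. Your additional remarks (no domain check needed, independence from $\sigma_1^2$) are accurate observations but do not change the argument.
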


\begin{proof}
	By definition,
	\[ \KL^\lin(P, Q) = \sup_{a} a \mu_1 - \log e^{a \mu_2 + a^2 \sigma_2^2/2} = \sup_a a (\mu_1 - \mu_2) - \frac{1}{2} a^2 \sigma_2^2 \]
	Differentiating wrt $a$ and setting the derivative to $0$, the optimum is achieved at $a = (\mu_1 - \mu_2)/\sigma_2^2$, at which the optimal value is $(\mu_1 - \mu_2)^2/2 \sigma_2^2$.
\end{proof}

\subsection{Renyi, Unbounded}

\begin{theorem}[Laplace Mechanism under $\alpha$-Renyi]\label{thm:lap-renyi}
  Let $P$ and $Q$ be i.i.d. Laplace distributions in $d$ dimensions 
  with mean 0 (resp. $v=(v_1,v_2,\ldots, v_d)$) and parameter
  $\frac{1}{\epsilon}$. Then,
  \[
    \renyidiv(P,Q) = \frac{1}{\alpha-1}\sum_{i=1}^d \log\left(
      \left(\frac{1}{2} + \frac{1}{4\alpha-2}\right)e^{v_i(\alpha-1)\epsilon}
      + \left(\frac{1}{2} - \frac{1}{4\alpha-2} \right)e^{-v_i\alpha\epsilon}
    \right)
  \]
\end{theorem}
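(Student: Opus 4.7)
The plan is to reduce the $d$-dimensional computation to $d$ one-dimensional computations using independence, and then to evaluate each one-dimensional Renyi integral directly by a piecewise integration.

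Since $P = \bigotimes_{i=1}^d P_i$ and $Q = \bigotimes_{i=1}^d Q_i$ with $P_i = Lap(0, 1/\epsilon)$ and $Q_i = Lap(v_i, 1/\epsilon)$, the Renyi integrand factorizes, so
\[
\int P(x)^{\alpha} Q(x)^{1-\alpha}\, dx \;=\; \prod_{i=1}^d \int P_i(x)^{\alpha} Q_i(x)^{1-\alpha}\, dx.
\]
Taking $\frac{1}{\alpha-1}\log$ of both sides turns the product into the sum of logs that appears in the statement, so it suffices to compute each one-dimensional integral and show it equals the per-coordinate term $\bigl(\tfrac{1}{2} + \tfrac{1}{4\alpha-2}\bigr) e^{v_i(\alpha-1)\epsilon} + \bigl(\tfrac{1}{2} - \tfrac{1}{4\alpha-2}\bigr) e^{-v_i \alpha \epsilon}$.

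For the one-dimensional integral, assume without loss of generality that $v_i \ge 0$ (the case $v_i<0$ gives the same value after relabeling, so the formula depends on $|v_i|$). Writing $P_i(x)^{\alpha} Q_i(x)^{1-\alpha} = \tfrac{\epsilon}{2}\exp\!\bigl(-\epsilon[\alpha|x| + (1-\alpha)|x-v_i|]\bigr)$, I split $\mathbf{R}$ into the three regions $(-\infty, 0)$, $(0, v_i)$, $(v_i, \infty)$ where both $|x|$ and $|x-v_i|$ are linear. The two unbounded tails contribute closed-form exponential integrals equal to $\tfrac{1}{2}e^{(\alpha-1)\epsilon v_i}$ and $\tfrac{1}{2}e^{-\alpha \epsilon v_i}$ respectively, while the middle interval contributes $\tfrac{1}{2(2\alpha-1)}\bigl(e^{(\alpha-1)\epsilon v_i} - e^{-\alpha\epsilon v_i}\bigr)$. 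Summing the three pieces and collecting coefficients of the two distinct exponentials gives exactly the per-coordinate expression above.

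Combining the two ingredients yields
\[
\renyidiv(P,Q) \;=\; \frac{1}{\alpha-1}\sum_{i=1}^d \log\!\left[\Bigl(\tfrac{1}{2}+\tfrac{1}{4\alpha-2}\Bigr)e^{v_i(\alpha-1)\epsilon} + \Bigl(\tfrac{1}{2}-\tfrac{1}{4\alpha-2}\Bigr)e^{-v_i\alpha\epsilon}\right],
\]
as claimed. There is no real conceptual obstacle here; the only care needed is in the piecewise bookkeeping, in particular verifying that combining $\tfrac{1}{2}+\tfrac{1}{2(2\alpha-1)} = \tfrac{1}{2}+\tfrac{1}{4\alpha-2}$ and $\tfrac{1}{2}-\tfrac{1}{2(2\alpha-1)}=\tfrac{1}{2}-\tfrac{1}{4\alpha-2}$ arranges the terms in the exact form in which the statement presents them.
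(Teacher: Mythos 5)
Your proposal is correct and follows essentially the same route as the paper: factorize the integral across coordinates by independence, then evaluate each one-dimensional integral by splitting $\mathbf{R}$ at $0$ and $v_i$ into three regions where the exponents are linear, obtaining the tail contributions $\tfrac{1}{2}e^{(\alpha-1)\epsilon v_i}$ and $\tfrac{1}{2}e^{-\alpha\epsilon v_i}$ and the middle contribution $\tfrac{1}{4\alpha-2}\bigl(e^{(\alpha-1)\epsilon v_i}-e^{-\alpha\epsilon v_i}\bigr)$. The only cosmetic difference is that the paper first computes the $\alpha$-divergence $\dist_\alpha(P,Q)$ and then converts to Renyi (so it can reuse that intermediate expression in a later corollary), whereas you take $\tfrac{1}{\alpha-1}\log$ of the product directly; the underlying integrals are identical.
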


\begin{proof}
  We first compute
  \[
    \dist_\alpha(P,Q) = \frac{1}{\alpha^2-\alpha} 
      \left(\int_{\bbR^d} \left(\frac{dP}{dQ}\right)^{\alpha}dQ - 1 \right)
  \]
We write the integral as a product. Let $p_i$ be the p.d.f. for the $Lap(i,
  \frac{1}{\epsilon})$ distribution:
  \begin{align*}
    \int_{\bbR^d}\left(\frac{dP}{dQ}\right)^\alpha dQ &= 
    \int_{\bbR^d}P(x)^\alpha Q(x)^{1-\alpha} dx \\ 
    &=
    \int_{\bbR^d}\left(\prod_{i=1}^dp_0(x_i)^\alpha p_{v_i}(x_i)^{1-\alpha}\right)
    dx \\
    &= \prod_{i=1}^d\int_{\bbR} p_0(x_i)^\alpha p_{v_i}(x_i)^{1-\alpha} dx_i
  \end{align*}
  We will compute each integral in the product individually. For the first case,
  suppose $v_i > 0$. We now split 
  the real line into three regions: $I_1=[\infty,0]$, $I_2
  = [0,v_i]$, and $I_3 = [v_i,\infty]$. Recall that $p_i(x) =
  \frac{\epsilon}{2}e^{-|x-i|\epsilon}$.
   \begin{align*}
     \int_{-\infty}^0p_0(x)^\alpha p_{v_i}(x)^{1-\alpha}dx 
     &= \frac{\epsilon}{2}\int_{-\infty}^0 \left(\frac{
       e^{x\epsilon} }{ e^{(x-v_i)\epsilon} } \right)^\alpha
           e^{(x-v_i)\epsilon}dx \\
        &= \frac{\epsilon}{2}\int_{-\infty}^0 e^{v_i\alpha\epsilon-v_i\epsilon+x\epsilon} dx =
        \frac{1}{2}e^{v_i(\alpha-1)\epsilon} \\
     \int_0^{v_i}p_0(x)^\alpha p_{v_i}(x)^{1-\alpha}dx
     &= \frac{\epsilon}{2}\int_0^{v_i} \left( \frac{
       e^{-x\epsilon} }{ e^{(x-v_i)\epsilon} } \right)^\alpha
           e^{(x-v_i)\epsilon}dx \\
        &= \frac{\epsilon}{2}\int_{0}^{v_i} e^{(1-2\alpha)x\epsilon+v_i(\alpha-1)\epsilon} dx = 
        \frac{1}{2-4\alpha}e^{v_i(\alpha-1)\epsilon}( e^{v_i(1-2\alpha)\epsilon}-1 )\\
     \int_{v_i}^\infty p_0(x)^\alpha p_{v_i}(x)^{1-\alpha} &=
     \frac{\epsilon}{2}\int_{v_i}^\infty \left(\frac{
       e^{-x\epsilon} }{ e^{(v_i-x)\epsilon} } \right)^\alpha
           e^{(v_i-x)\epsilon}dx \\
        &= \frac{\epsilon}{2}\int_{v_i}^\infty e^{-v_i\alpha\epsilon+v_i\epsilon-x\epsilon} dx =
        \frac{1}{2}e^{-v_i\alpha\epsilon} \\
     \int_\bbR p_0(x)^\alpha p_{v_i}(x)^{1-\alpha}dx &= \frac{1}{2}e^{v_i(\alpha-1)\epsilon} +
     \frac{1}{2}e^{-v_i\alpha\epsilon} + 
        \frac{1}{2-4\alpha}(e^{-v_i\alpha\epsilon}-e^{v_i(\alpha-1)\epsilon}) \\
        &= \left(\frac{1}{2} + \frac{1}{4\alpha-2}\right)e^{v_i(\alpha-1)\epsilon}
           + \left(\frac{1}{2} - \frac{1}{4\alpha-2}\right)e^{-v_i\alpha\epsilon}
  \end{align*} 
  When $v_i < 0$, we get the same answer, with $v_i$ replaced by $-v_i$.
  Let $F(x) = \left(\frac{1}{2} + \frac{1}{4\alpha-2}\right)e^{(\alpha-1)|x|}
           + \left(\frac{1}{2} - \frac{1}{4\alpha-2}\right)e^{-\alpha |x|}$.
  We can write
  \begin{equation}\label{eq:lapalphadiv}
    \dist_{\alpha}(P,Q) = \frac{1}{\alpha^2-\alpha}\left(\prod_{i=1}^dF(v_i\epsilon)
    - 1 \right)
  \end{equation}
  The expression for $\renyidiv(P,Q)$ follows easily.
\end{proof}

\begin{corollary}
  If $\alpha \geq 1$, then $\renyidiv(P,Q) \geq \epsilon\|v\|_1$, where $v =
  (v_1,\ldots, v_d)$.
\end{corollary}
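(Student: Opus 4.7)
The plan is to reduce the $d$-dimensional statement to a single per-coordinate inequality via the product structure of $P$ and $Q$, then establish that scalar inequality using the closed-form expression from Theorem~\ref{thm:lap-renyi}. Concretely, because $P$ and $Q$ are products of Laplace marginals, the $\alpha$-divergence factorizes across coordinates, and Theorem~\ref{thm:lap-renyi} (after symmetrizing in $v_i$ as its proof does) gives
\[
  \renyidiv(P,Q) = \frac{1}{\alpha-1}\sum_{i=1}^d \log F(|v_i|\epsilon), \qquad F(y) = \tfrac{\alpha}{2\alpha-1}e^{(\alpha-1)y} + \tfrac{\alpha-1}{2\alpha-1}e^{-\alpha y}.
\]
Since $\|v\|_1 = \sum_i |v_i|$, the corollary reduces to the scalar claim $\log F(y) \geq (\alpha-1)y$ for every $y \geq 0$ and $\alpha \geq 1$, equivalently $F(y) \geq e^{(\alpha-1)y}$.

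The second step attacks this scalar inequality. Dividing through by $e^{(\alpha-1)y}$ yields the equivalent form $\frac{\alpha}{2\alpha-1} + \frac{\alpha-1}{2\alpha-1}e^{-(2\alpha-1)y} \geq 1$: the dominant constant $\frac{\alpha}{2\alpha-1}$ falls short of $1$ by exactly $\frac{\alpha-1}{2\alpha-1}$, and this deficit must be supplied by the exponentially-decaying tail term. To furnish it uniformly in $y \geq 0$, my approach couples two ingredients. The first is the monotonicity of $\renyidiv$ in $\alpha$ together with the known limit $\renyidiv(P,Q) \to D_\infty(P,Q) = \epsilon\|v\|_1$ as $\alpha \to \infty$, which pins down the asymptotic behavior of the left-hand side. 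The second is a data-processing argument using a direction-aligned test statistic $T(x) = \mathbf{1}\{\sum_i \sgn(v_i)\, x_i > \|v\|_1/2\}$, whose induced Bernoulli Renyi divergence can be computed in closed form and lifted back to $\renyidiv(P,Q)$ via DPI (Theorem~\ref{thm:postprocessing}) to recover the full $\ell_1$-sensitivity scale.

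The main obstacle I anticipate is the tension between the pointwise shortfall that the coordinate-wise expansion of $F(y)$ produces and the uniform-in-$\alpha$ lower bound the corollary asserts: a term-by-term analysis alone leaves a $\log((2\alpha-1)/\alpha)$ gap per coordinate that vanishes only in the $\alpha\to\infty$ limit. Closing this gap at every finite $\alpha \geq 1$ will require extracting sharpness from a non-coordinate-wise source, for which the most promising routes are (i) combining the $\calH$-restricted data-processing inequality (Theorem~\ref{thm:postprocessing}) with the Pinsker-like bound (Theorem~\ref{thm:pinsker}) to upgrade an IPM-level lower bound to a Renyi-level one, or (ii) a direct monotonicity argument on the map $\alpha \mapsto \frac{\log F(y)}{\alpha-1}$ that certifies the limit value $y$ is attained from above rather than from below. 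The heart of the argument lies in identifying which of these structural observations genuinely captures the full $\epsilon\|v\|_1$ at every $\alpha \geq 1$.
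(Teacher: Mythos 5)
Your reduction to the scalar inequality $F(y)\geq e^{(\alpha-1)y}$ with $F(y)=\frac{\alpha}{2\alpha-1}e^{(\alpha-1)y}+\frac{\alpha-1}{2\alpha-1}e^{-\alpha y}$ is correct, and so is your diagnosis that it fails: dividing by $e^{(\alpha-1)y}$ gives $\frac{\alpha}{2\alpha-1}+\frac{\alpha-1}{2\alpha-1}e^{-(2\alpha-1)y}$, which equals $1$ at $y=0$ and is strictly decreasing in $y$ for $\alpha>1$. But the conclusion to draw is not that a cleverer argument is needed; it is that the per-coordinate deficit of $\log\frac{2\alpha-1}{\alpha}\leq\log 2$ is irreducible, because the formula of Theorem~\ref{thm:lap-renyi} is exact. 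Hence $\renyidiv(P,Q)<\epsilon\|v\|_1$ strictly whenever $\alpha>1$ and $v\neq 0$, and the statement that actually holds (and is the one recorded in Table~\ref{tab:divergences}) is $\renyidiv(P,Q)\geq\epsilon\|v\|_1-\frac{d\log 2}{\alpha-1}$. The paper's argument is exactly your first step with the slack accepted rather than fought: bound $F(x)\geq\frac{1}{2}e^{(\alpha-1)|x|}$ (the leading coefficient already exceeds $\frac{1}{2}$ and the second term is nonnegative for $\alpha\geq 1$), multiply over coordinates to get $\dist_\alpha(P,Q)\geq\frac{1}{\alpha^2-\alpha}\bigl(2^{-d}e^{\epsilon(\alpha-1)\|v\|_1}-1\bigr)$, and convert via~\eqref{eqn:alpharenyi}; the corollary as printed, and the displayed inequality in the appendix, silently drop the $2^{-d}$, but the table entry retains it.

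Your proposed rescues cannot close the gap and in fact point the other way. Monotonicity of $\alpha\mapsto\renyidiv$ shows the limit $D_\infty=\epsilon\|v\|_1$ is approached from \emph{below}, i.e. it yields the upper bound $\renyidiv(P,Q)\leq\epsilon\|v\|_1$, the reverse of what you need. The data-processing route is likewise hopeless: post-processing by a threshold statistic only decreases the divergence, so it can never certify a lower bound exceeding the exact value, which is already strictly below $\epsilon\|v\|_1$. Theorem~\ref{thm:pinsker} relates an IPM to a restricted KL divergence and gives nothing here. The fix is simply to prove the statement with the $-\frac{d\log 2}{\alpha-1}$ correction, which your first paragraph already essentially accomplishes.
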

\begin{proof}
  When $\alpha > 1$, then $e^{(\alpha-1)|x|} > e^{-\alpha |x|}$.
  Thus, $F(x)$, defined above, is lower bounded by
  $\frac{1}{2}e^{(\alpha-1)|x|}$. Plugging into Equation~(\ref{eq:lapalphadiv}),
  we get $\dist_\alpha(P,Q) \geq
  \frac{1}{\alpha^2-\alpha}\left(e^{\epsilon(\alpha-1)\|v\|_1}-1\right)$. The result for
  $\renyidiv(P,Q)$ follows easily.
\end{proof}

\begin{theorem}[Gaussian mechanism under $\alpha$-Renyi]
  \label{thm:gaussian-renyi}
  Let $P$ and $Q$ be i.i.d. Normal distributions in $d$ dimensions with mean $0$
  (resp. $v=(v_1,v_2,\ldots, v_d)$) and variance $\sigma^2$. Then,
  $\renyidiv(P,Q) = \frac{\alpha\|v\|_2^2}{2\sigma^2}$.
\end{theorem}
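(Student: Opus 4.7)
The plan is to compute the Renyi divergence directly from its definition by exploiting the product structure of the i.i.d. Gaussians. Writing
\[
  \renyidiv(P,Q) \;=\; \frac{1}{\alpha-1}\log\int_{\bbR^d} P(x)^{\alpha} Q(x)^{1-\alpha}\,dx,
\]
I would observe that since $P = \prod_{i=1}^d \calN(0,\sigma^2)$ and $Q = \prod_{i=1}^d \calN(v_i,\sigma^2)$, the integrand factors, so the $d$-dimensional integral becomes $\prod_{i=1}^d \int_\bbR p_0(x)^\alpha p_{v_i}(x)^{1-\alpha}\,dx$, where $p_\mu$ is the $\calN(\mu,\sigma^2)$ density. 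This reduces the problem to a one-dimensional Gaussian computation, which I would then aggregate.

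For the one-dimensional computation, I would expand the exponent $\alpha x^2 + (1-\alpha)(x-v_i)^2$ and complete the square. The algebra gives
\[
  \alpha x^2 + (1-\alpha)(x-v_i)^2 \;=\; (x - (1-\alpha)v_i)^2 + \alpha(\alpha-1)v_i^2,
\]
so the integral becomes $\exp\!\left(\frac{\alpha(\alpha-1)v_i^2}{2\sigma^2}\right)$ times the integral of a $\calN((1-\alpha)v_i,\sigma^2)$ density, which is $1$. Multiplying over coordinates yields
\[
  \int_{\bbR^d}P(x)^\alpha Q(x)^{1-\alpha}\,dx \;=\; \exp\!\left(\frac{\alpha(\alpha-1)\|v\|_2^2}{2\sigma^2}\right),
\]
and taking $\frac{1}{\alpha-1}\log(\cdot)$ gives the claimed $\frac{\alpha\|v\|_2^2}{2\sigma^2}$.

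There is no real obstacle here: the factorization across dimensions is immediate because both distributions are product distributions with independent coordinates, and the one-dimensional integral is a textbook completion-of-the-square calculation. The only mild care needed is ensuring the completion of the square correctly tracks the sign, which is why the cross term produces $\alpha(\alpha-1)v_i^2$ (nonnegative for $\alpha \geq 1$) rather than its negative; this is precisely what makes the log collapse to a clean closed form. Alternatively, if a shortcut is desired, one could invoke the standard identity $\renyidiv(\calN(\mu_1,\Sigma),\calN(\mu_2,\Sigma)) = \tfrac{\alpha}{2}(\mu_1-\mu_2)^\top \Sigma^{-1}(\mu_1-\mu_2)$ with $\Sigma = \sigma^2 I_d$, but the direct computation above is short enough that it is worth writing out for completeness.
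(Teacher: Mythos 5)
Your proposal is correct and follows essentially the same route as the paper: the paper computes the $\alpha$-divergence by factoring the integral $\int P^\alpha Q^{1-\alpha}$ across coordinates and completing the square to get $e^{v_i^2(\alpha^2-\alpha)/(2\sigma^2)}$ per coordinate, then converts to Renyi, whereas you write the Renyi divergence directly --- the core calculation is identical. One small slip: your displayed identity should read $\alpha x^2 + (1-\alpha)(x-v_i)^2 = (x-(1-\alpha)v_i)^2 - \alpha(\alpha-1)v_i^2$ (minus sign on the constant term), which after multiplying by $-\tfrac{1}{2\sigma^2}$ yields the positive exponent $e^{+\alpha(\alpha-1)v_i^2/(2\sigma^2)}$ that you correctly carry forward to the final answer.
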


\begin{proof}
  We will compute
  \[
    \dist_\alpha(P,Q) = \frac{1}{\alpha^2-\alpha} 
      \left(\int_{\bbR^d} \left(\frac{dP}{dQ}\right)^{\alpha}dQ - 1 \right)
  \]
  Just like Theorem~\ref{thm:lap-renyi}, we can write
  \[
    \int_{\bbR^d}\left(\frac{dP}{dQ}\right)^\alpha dQ = 
    \prod_{i=1}^d\int_{\bbR} p_0(x_i)^\alpha p_{v_i}(x_i)^{1-\alpha} dx_i
  \]
  where $p_i(x)$ is the p.d.f. of $\calN(i, \sigma^2)$. Therefore,
  \begin{align*}
    \int_{\bbR} p_0(x)^\alpha p_{v_i}(x)^{1-\alpha}dx &=
    \frac{1}{\sqrt{2\pi\sigma^2}}\int_{-\infty}^\infty \left( \frac{
      e^{-x^2/(2\sigma^2)} }
    { e^{-(x-v_i)^2/(2\sigma^2)} }\right)^\alpha e^{-(x-v_i)^2/(2\sigma^2)} dx \\
    &= \frac{1}{\sqrt{2\pi\sigma^2}}\int_{-\infty}^\infty 
        e^{( -(x+v_i(\alpha-1))^2 + v_i^2(\alpha-1)^2+v_i^2(\alpha-1))/(2\sigma^2)}dx
    \\
    &= e^{v_i^2((\alpha-1)^2 + (\alpha-1))/(2\sigma^2)} =
    e^{v_i^2(\alpha^2-\alpha)/(2\sigma^2)}
  \end{align*}
  Therefore,
  \[
    \dist_\alpha(P,Q) = \frac{1}{\alpha^2-\alpha}\left(\prod_{i=1}^d
    e^{v_i^2(\alpha^2-\alpha)/(2\sigma^2)} -1 \right) = \frac{1}{\alpha^2-\alpha}
    (e^{\|v\|_2^2(\alpha^2-\alpha)/(2\sigma^2)}-1)
  \]
  The result for $\renyidiv(P,Q)$ follows immediately.
\end{proof}

\subsection{Renyi, Linear-Bounded}

\begin{lemma}[Non-private Release]\label{lemma:non-private}
  Let $A$ be a mechanism such that there exist two datasets $D$ and $D'$ for
  which $A(D)$ and $A(D')$ are different point masses. Let $\calH$ be a function
  class that contains linear functions. Then, $\renyidiv(A(D), A(D')) = \infty$.
\end{lemma}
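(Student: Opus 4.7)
Although the displayed conclusion is written as $\renyidiv$, the hypothesis about $\calH$ only becomes relevant if we interpret it as the $\calH$-restricted Renyi divergence $\renyidiv^{\calH}$, which is what the lemma is really asserting (the unrestricted version would be trivially $\infty$ because point masses at distinct atoms are mutually singular). Via the closed-form relation~\eqref{eqn:alpharenyi}, it suffices to show $\dist_{\alpha}^{\calH}(A(D), A(D')) = \infty$, and by Jensen/monotonicity it suffices to handle $\alpha > 1$ and then let $\alpha \downarrow 1$ for the KL case.

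The plan is to exploit the variational representation~\eqref{eqn:fdivrestricted} directly. Write $P = A(D) = \delta_a$ and $Q = A(D') = \delta_b$ with $a \neq b$, so for every $h$ in the supremum we have the trivial simplification
\[
\bbE_{x\sim P}[h(x)] - \bbE_{x\sim Q}[f^*(h(x))] \;=\; h(a) - f^*(h(b)).
\]
Because $a \neq b$ and $\calH$ contains the linear functions on the output space, for any pair of target values $(t,s) \in \bbR^2$ there exists $h \in \calH$ with $h(a) = t$ and $h(b) = s$ (take, e.g., an affine function that is linear in the direction $a-b$ and scaled appropriately). Thus the supremum over $\calH$ in~\eqref{eqn:fdivrestricted} dominates
\[
\sup_{t,s \in \bbR}\; t - f^*(s).
\]

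The remaining step is to verify that this two-variable sup is $+\infty$ for the $\alpha$-divergence generator $f(t) = (t^{\alpha}-1)/(\alpha^2-\alpha)$. Fix $s=0$: then $f^*(0) = -\inf_{x\geq 0} f(x) = 1/(\alpha^2-\alpha)$, a finite constant, since $f$ attains its infimum at $x=0$ for $\alpha>1$. Driving $t \to +\infty$ while holding $s=0$ makes $h(a) - f^*(h(b)) \to +\infty$, so $\dist_\alpha^{\calH}(P,Q) = \infty$. Plugging into~\eqref{eqn:alpharenyi} gives $\renyidiv^{\calH}(A(D),A(D')) = \infty$ as claimed.

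The only mild subtlety—and the main thing to double-check in a careful write-up—is ensuring that the linear function $h$ chosen to realize arbitrary $(t,s)$ on the two-point set $\{a,b\}$ actually lies in the function class used to define $\calH$ in the relevant domain (e.g., $\bbR^n$ when the outputs are vectors). Since $a\neq b$, one can always write $h(x) = \alpha \langle x-b, a-b\rangle/\|a-b\|^2 + \beta$ and tune $\alpha,\beta$ to hit any prescribed values at $a,b$; translation invariance or inclusion of a constant term is not needed because the pure linear functional $x \mapsto \langle x-b, a-b\rangle/\|a-b\|^2$ already separates the two atoms, and scaling it suffices to blow up $h(a)$ while keeping $h(b)=0$. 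No other obstacle arises.
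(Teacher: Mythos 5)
Your proof is correct and takes essentially the same route as the paper's: both restrict the supremum in the variational form to the (affine) linear class, pick an $h$ that vanishes at the atom of $A(D')$ and is positive at the atom of $A(D)$, and scale $h$ so that $\bbE_P[h]\to\infty$ while $f^*(h(b))=f^*(0)$ stays finite. One small caveat: your closing remark that ``inclusion of a constant term is not needed'' is misleading---the $h$ you construct is affine (it carries the shift by $b$), and a purely homogeneous linear functional cannot in general vanish at $b$ while being nonzero at $a$ (e.g.\ when $a$ and $b$ are parallel and nonzero, as in one dimension); fortunately the paper's class $\lin$ explicitly includes intercepts, so the construction stands.
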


\begin{proof}
	Let $P$ denote the distribution of $A(D)$ and $Q$ denote the distribution of
  $A(D')$.
  If we show that $\dist^\calH_\alpha(P,Q) = \infty$, then certainly
  $\renyidiv^\calH(P,Q) = \infty$.
  Observe that $\dist^\calH_{\alpha}(P,Q) \geq \dist^\lin_\alpha(P,Q)$ by assumption.
  Hence, we are done if we show that $\dist^\lin_\alpha(P,Q) = \infty$.
  \[ \dist^\lin_{\alpha}(P, Q) \geq \sup_{a, b \in \bbR} \bbE_{x \sim P} [ ax + b] - 
  C_\alpha \bbE_{x \sim Q} [|ax + b|^{\alpha/(\alpha-1)}] \]
	Suppose $P$ and $Q$ are point masses at $x = p$ and $x = q$ respectively.
  Then, there exists an $a$ and a $b$ such that $a p + b > 0$ and $a q + b = 0$.
  Pick any $\lambda > 0$. Plugging into the above,
	\[ \dist^\lin_\alpha(P, Q) \geq \lambda (a p + b) - \lambda (a q + b) 
  = \lambda (a p + b) \]
	Observe that as $ap + b > 0$ and is fixed with $\lambda$, $\lambda (a p + b) \rightarrow \infty$ as $\lambda \rightarrow \infty$. The lemma follows. 
\end{proof}

\begin{lemma}\label{lem:alphadivdual}
  Let $\calX$ be an instance space and $\phi$ be a vector of feature functions on
  $\calX$ of length $M$. Then, for any two distributions $P$ and $Q$ on $\calX$,
  we have:
  \begin{align*}
    \dist_{\alpha}^\calH(P,Q) &= \sup_{h \in \calH} \bbE_{x\sim P}[h(x)]
    - C_\alpha \bbE_{x\sim Q}[|h(x)|^{\alpha/(\alpha-1)}] -
    \frac{1}{\alpha^2-\alpha} \\
    \dist_{\alpha}^\lin(P,Q) &= \sup_{a \in \bbR^n, b \in \bbR} \bbE_{x\sim P}[a^Tx+b]
    - C_\alpha \bbE_{x\sim Q}[|a^Tx+b|^{\alpha/(\alpha-1)}] -
    \frac{1}{\alpha^2-\alpha} \\
  \end{align*}

  where $C_\alpha = \frac{(\alpha-1)^{\alpha/(\alpha-1)}}{\alpha}$.
\end{lemma}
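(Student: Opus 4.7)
The plan is to read off both equalities directly from the general variational dual for restricted $f$-divergences (equation (3) in the paper) once the Fenchel conjugate $f^*$ of the particular $f$ for $\alpha$-divergence has been computed. Recall that the $\alpha$-divergence corresponds to $f(t) = (|t|^\alpha - 1)/(\alpha^2 - \alpha)$; the restricted $f$-divergence formula gives $\dist_\alpha^\calH(P,Q) = \sup_{h \in \calH} \bbE_P[h] - \bbE_Q[f^*(h)]$, so the entire proof reduces to evaluating $f^*$ in closed form.

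For the Fenchel conjugate, I would split the one-dimensional optimization $f^*(s) = \sup_{x \in \bbR} \, xs - (|x|^\alpha - 1)/(\alpha^2 - \alpha)$ into the cases $s \geq 0$ and $s \leq 0$. For $s \geq 0$, restrict to $x \geq 0$ (since negative $x$ would only decrease the linear term while keeping $|x|^\alpha$ unchanged); differentiating gives the stationary point $x^\star = (s(\alpha - 1))^{1/(\alpha-1)}$. Plugging back, the algebra collapses through the identity $(\alpha-1)^{\alpha/(\alpha-1)}/(\alpha(\alpha-1)) = (\alpha-1)^{1/(\alpha-1)}/\alpha$ to
\begin{equation*}
f^*(s) \;=\; \frac{(\alpha-1)^{\alpha/(\alpha-1)}}{\alpha}\, s^{\alpha/(\alpha-1)} + \frac{1}{\alpha^2 - \alpha} \;=\; C_\alpha\, s^{\alpha/(\alpha-1)} + \frac{1}{\alpha^2 - \alpha}.
\end{equation*}
For $s \leq 0$, I would use the substitution $x = -y$ with $y \geq 0$ and repeat the same calculation, obtaining $f^*(s) = C_\alpha |s|^{\alpha/(\alpha-1)} + 1/(\alpha^2 - \alpha)$. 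Combining the two cases, $f^*(s) = C_\alpha |s|^{\alpha/(\alpha-1)} + 1/(\alpha^2 - \alpha)$ for all $s \in \bbR$.

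Substituting this $f^*$ into (3), the additive constant $1/(\alpha^2 - \alpha)$ can be pulled out of the expectation and subtracted uniformly, giving the first claimed identity
\begin{equation*}
\dist_\alpha^\calH(P, Q) = \sup_{h \in \calH} \bbE_{x \sim P}[h(x)] - C_\alpha \bbE_{x \sim Q}[|h(x)|^{\alpha/(\alpha-1)}] - \frac{1}{\alpha^2 - \alpha}.
\end{equation*}
For the second identity, simply specialize $\calH = \lin = \{x \mapsto a^\top x + b : a \in \bbR^n, b \in \bbR\}$, and parameterize the sup over $h \in \lin$ by its coefficients $(a, b)$.

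The main obstacle is the careful handling of the absolute value and the sign of $s$, which is easy to mishandle; I would write it out symmetrically rather than try to differentiate $|x|^\alpha$ directly at $x = 0$. A minor secondary point is verifying that the supremum is genuinely achieved (so that the variational form is valid over the unrestricted class $\calF$ of measurable functions before specializing to $\calH$); this follows from the fact that the pointwise maximizer $h(x) = \mathrm{sign}(\tfrac{dP}{dQ}(x) - 0)\cdot(\tfrac{dP}{dQ}(x))\cdot\text{(const)}$, which recovers the primal $f$-divergence at the optimum, is well-defined whenever $P$ is absolutely continuous with respect to $Q$ — exactly the standing assumption on $f$-divergences in the preliminaries.
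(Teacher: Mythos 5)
Your proposal is correct and follows essentially the same route as the paper: compute the Fenchel conjugate of $f(t) = (|t|^\alpha - 1)/(\alpha^2-\alpha)$ by a stationary-point calculation and substitute it into the variational form of the restricted divergence, then specialize $\calH$ to $\lin$. Your write-up is in fact a bit more careful than the paper's, which contains two typos in its expression for the conjugate (it writes $C_\alpha|s|$ rather than $C_\alpha|s|^{\alpha/(\alpha-1)}$ and flips the sign of the additive constant $1/(\alpha^2-\alpha)$); your version $f^*(s) = C_\alpha|s|^{\alpha/(\alpha-1)} + 1/(\alpha^2-\alpha)$ is the one consistent with the lemma as stated.
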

\begin{proof}
  We need to compute $f^*(s) = \sup_{x \in \bbR} xs - f(x)$ 
  for $f = \frac{|x|^\alpha-1}{\alpha^2-\alpha}$ 
  Setting the derivative of $f$ to zero, we get:
	\[
   	s - \sgn(x)\frac{|x|^{\alpha-1}}{\alpha-1} = 0 \implies x =
	  (|s|(\alpha-1))^{1/(\alpha-1)}
	\]
  Therefore,
	\[
		f^*(s) = \frac{1}{\alpha}(|s|(\alpha-1))^{\alpha/(\alpha-1)} -
    \frac{1}{\alpha^2-\alpha} = C_\alpha|s| - \frac{1}{\alpha^2-\alpha}
	\]
  This allows us to derive the expressions for $\dist_\alpha^\lin$ and
  $\dist_\alpha^\calH$ by plugging into
  \[
    \dist_\alpha^\calH(P,Q) = \sup_{h \in \calH}\bbE_{x \sim P}[h(x)] - \bbE_{x \sim
    Q}[f^*(h(x))]
  \]
\end{proof}

\begin{lemma}\label{lem:divdual-sym}
  Suppose $P$ is a $d$-dimensional r.v. such that $\sgn(\bbE_{x \sim P}[x]) =
  (e_1, e_2,\ldots e_d)$ and
  $Q$ is a $d$-dimemsional r.v. with independent coordinates and marginals 
  symmetric about 0. Then, the variational form of $\dist_{\alpha}^{\lin}(P,Q)$
  can be written as
  \[
    \sup_{a \in \bbR^d, \sgn(a_i)=e_i, b \in \bbR, b \geq 0}
    \bbE_{x \sim P}[ax] + b - C_\alpha\bbE_{x \sim
    Q}[|ax+b|^{\alpha/(\alpha-1)}] - \frac{1}{\alpha^2-\alpha}
  \]
  where $C_\alpha = \frac{(\alpha-1)^{\alpha/(\alpha-1)}}{\alpha}$.
\end{lemma}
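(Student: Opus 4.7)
The plan is to reduce the unrestricted supremum from Lemma~\ref{lem:alphadivdual} to the restricted one via a symmetry argument on $Q$. Let $\mu = \bbE_{x \sim P}[x]$, so by hypothesis $e_i = \sgn(\mu_i)$, and denote the objective by
\[ F(a, b) = a^T \mu + b - C_\alpha \bbE_{x \sim Q}\bigl[|a^T x + b|^{\alpha/(\alpha-1)}\bigr] - \frac{1}{\alpha^2 - \alpha}. \]
It suffices to exhibit, for every $(a, b) \in \bbR^d \times \bbR$, a candidate $(a', b')$ with $\sgn(a'_i) = e_i$ and $b' \geq 0$ satisfying $F(a', b') \geq F(a, b)$.

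The key step is to show that the nonlinear term $T(a, b) := \bbE_{x \sim Q}[|a^T x + b|^{\alpha/(\alpha-1)}]$ depends only on $(|a_1|, \ldots, |a_d|, |b|)$. For a sign flip of a single $a_i$: the $i$-th marginal of $Q$ is symmetric about $0$ and independent of the other coordinates, so replacing $X_i$ by $-X_i$ leaves the joint distribution of $X$ unchanged under $Q$, hence $T$ is unchanged. For a sign flip of $b$: independence together with symmetry of each marginal implies that $-X$ and $X$ are equal in distribution under $Q$, so $a^T X$ is symmetric about $0$, and thus $|a^T X + b|$ and $|a^T X - b|$ are equal in distribution.

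Given this invariance, for each index $i$ with $e_i \neq 0$ set $a'_i = e_i |a_i|$ (which is $\pm a_i$, so $T$ is unaffected), and set $b' = |b|$. Since $\sgn(\mu_i) = e_i$ we obtain $a'_i \mu_i = |a_i||\mu_i| \geq a_i \mu_i$ and $b' \geq b$, hence
\[ a'^T \mu + b' \geq a^T \mu + b, \]
while $T(a', b') = T(a, b)$. Therefore $F(a', b') \geq F(a, b)$, yielding the restricted form.

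The only subtlety is an index $i$ with $\mu_i = 0$ (so $e_i = 0$), which forces $a'_i = 0$ rather than a mere sign flip. Here I would invoke a Jensen-type argument: setting $Y = \sum_{j \neq i} a_j X_j + b$ under $Q$, independence of $X_i$ and $Y$ together with $\bbE[X_i] = 0$ and convexity of $t \mapsto |t|^{\alpha/(\alpha-1)}$ give
\[ \bbE\bigl[|Y + a_i X_i|^{\alpha/(\alpha-1)} \mid Y\bigr] \geq |Y|^{\alpha/(\alpha-1)}. \]
Taking expectations shows $T(a, b) \geq T(a^{(-i)}, b)$, where $a^{(-i)}$ zeroes coordinate $i$, while the $P$-term is unaffected since $\mu_i = 0$. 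Hence zeroing $a_i$ weakly improves $F$, and the restriction $\sgn(a'_i) = 0$ is without loss. The main obstacle is purely the bookkeeping of signs and this degenerate coordinate case; the substantive content is the invariance of $T$ under sign flips coming from the symmetry assumption on $Q$.
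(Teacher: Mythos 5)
Your proof is correct and follows essentially the same route as the paper's: invoke the unrestricted variational form from Lemma~\ref{lem:alphadivdual}, observe that the term $\bbE_{x\sim Q}[|a^Tx+b|^{\alpha/(\alpha-1)}]$ depends only on the magnitudes $|a_i|$ and $|b|$ by the symmetry and independence of $Q$'s coordinates, and conclude that aligning $\sgn(a_i)$ with $e_i$ and taking $b\geq 0$ can only increase the linear term. Your additional Jensen argument for the degenerate coordinates with $e_i=0$ is a welcome extra that the paper's proof silently skips.
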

\begin{proof}
  By Lemma~\ref{lem:alphadivdual},
  \begin{equation}\label{eq:alphadivdual2}
    \dist^{\lin}_{\alpha}(P , Q) 
    = \sup_{a \in \bbR^d,b \in \bbR} \bbE_{x \sim P}[a^Tx] + b
      - C_\alpha\bbE_{x \sim Q}[|a^Tx+b|^{\alpha/(\alpha-1)}]
      - \frac{1}{\alpha^2-\alpha}
  \end{equation}
  Let $a = (a_1,a_2,\ldots, a_d)$.
  The distribution of $|a^TQ+b|$ is determined only by the magnitude of each
  $a_i$, not its sign, because of the symmetry of $Q$. The sign of $b$ does not
  matter, either, as $|a^TQ-b| = |(-a)^TQ+b| = |a^TQ+b|$.
  Therefore,~(\ref{eq:alphadivdual2}) achieves its maximum value when
  $\sgn(a_i) = e_i$ and $b > 0$, and we are done.
\end{proof}
\begin{lemma}\label{lem:dual-max}
  The function $f(a) = a-Xa^{\alpha/(\alpha-1)}$
  has a global maximum of
  \[
    \frac{(\alpha-1)^{\alpha-1}}{\alpha^{\alpha}X^{\alpha-1}}
  \]
\end{lemma}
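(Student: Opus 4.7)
The plan is a direct calculus exercise: compute the unique critical point of $f$ and then evaluate $f$ there, relying on a concavity/end-behavior check to confirm that the critical point is actually the global maximum. First I would note that in the setting where this lemma is applied, $X > 0$ and we optimize over $a \geq 0$ (the exponent $\alpha/(\alpha-1)$ need not be an integer, so the power $a^{\alpha/(\alpha-1)}$ is naturally defined on the nonnegative reals).

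Next I would differentiate: $f'(a) = 1 - X \cdot \frac{\alpha}{\alpha-1} \cdot a^{1/(\alpha-1)}$. Setting $f'(a) = 0$ and solving yields the unique stationary point
\[
a^* = \left(\frac{\alpha-1}{\alpha X}\right)^{\alpha-1}.
\]
Then I would plug $a^*$ back into $f$. Writing $f(a^*) = a^* - X(a^*)^{\alpha/(\alpha-1)} = a^*\bigl(1 - X (a^*)^{1/(\alpha-1)}\bigr)$, the inner factor simplifies using $X(a^*)^{1/(\alpha-1)} = (\alpha-1)/\alpha$, giving $f(a^*) = a^*/\alpha = \frac{(\alpha-1)^{\alpha-1}}{\alpha^{\alpha} X^{\alpha-1}}$, which matches the claimed value.

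Finally I would justify that this is a \emph{global} maximum and not merely a stationary point. Since $\alpha > 1$ implies $\alpha/(\alpha-1) > 1$, the map $a \mapsto a^{\alpha/(\alpha-1)}$ is strictly convex on $[0,\infty)$, so $f$ is strictly concave (its second derivative $-X \cdot \frac{\alpha}{(\alpha-1)^2} a^{(2-\alpha)/(\alpha-1)}$ is negative for $a > 0$). Combined with $f(0) = 0$ and $f(a) \to -\infty$ as $a \to \infty$, strict concavity forces $a^*$ to be the unique global maximizer on $[0,\infty)$, completing the proof. I do not foresee a real obstacle here — the only mild subtlety is specifying the domain and invoking concavity rather than just writing down the first-order condition.
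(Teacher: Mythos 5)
Your proof is correct and follows essentially the same route as the paper's: find the unique critical point via the first-order condition, evaluate $f$ there, and use concavity to conclude it is the global maximum. Your version is slightly more careful about the domain (restricting to $a \geq 0$ since the exponent $\alpha/(\alpha-1)$ need not be an integer), but the argument is the same.
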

\begin{proof}
  We observe that $f(a)$ is concave down over all real numbers. 
  Its derivative vanishes when $a =
  \frac{(\alpha-1)^{\alpha-1}}{\alpha^{\alpha-1}X^{\alpha-1}}$,
  and because of the first observation,
  this is the global maximum. This is equal to
  \[
    \frac{(\alpha-1)^{\alpha-1} }{\alpha^{\alpha-1} X^{\alpha-1}} -
    \frac{(\alpha-1)^\alpha}{\alpha^\alpha X^{\alpha-1}}
    =
    \frac{(\alpha-1)^{\alpha-1}}{\alpha^{\alpha}X^{\alpha-1}}
  \]
\end{proof}

\begin{theorem}\label{thm:linearbound}
  (Symmetric Distributions under lin $\alpha$-Renyi): Suppose $P = X$ and $Q=X+v$
  where $X$ is a $d$-dimensional r.v. consisting of $d$ i.i.d samples from an
  underlying distribution $Y$, $v \in \bbR^d$, and $Y$ is symmetric around 0 such that
  $\bbE[|Y|^{\alpha/(\alpha-1)}] = K$. Then,
  \[
     \renyidiv^{\lin} (P, Q) \leq 
     \frac{1}{\alpha-1}\log\left(1+\frac{\|v\|_\alpha^\alpha}{(0.5^dK)^{\alpha-1}}\right)
  \]
\end{theorem}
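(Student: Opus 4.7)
My plan is to reduce the claim to a bound on $\dist_\alpha^\lin$, apply the variational form in Lemma~\ref{lem:alphadivdual}, use the symmetry of $Y$ to restrict the supremum to nonnegative parameters, establish a one-shot lower bound on the central expectation via a sign-conditioning argument, and conclude with coordinatewise optimization via Lemma~\ref{lem:dual-max}. Equation~\eqref{eqn:alpharenyi} then converts everything back to $\renyidiv^\lin$.

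Because $P$ has mean zero and $Q = X+v$, the change of variable $c = a^\top v + b$ rewrites the sup in Lemma~\ref{lem:alphadivdual} as
\[
\sup_{a,\,c}\;(c - a^\top v) - C_\alpha\,\bbE\bigl[|a^\top X + c|^p\bigr] - \tfrac{1}{\alpha(\alpha-1)},\qquad p = \tfrac{\alpha}{\alpha-1}.
\]
The sign symmetry of each $X_i$ (as used in Lemma~\ref{lem:divdual-sym}) means the expectation depends on $a$ only through $|a|$; I therefore take $c \geq 0$ and $\sgn(a_i)=-\sgn(v_i)$, and with $\tilde a_i = |a_i|$ the objective becomes $(c + \tilde a^\top |v|) - C_\alpha\,\bbE[|\tilde a^\top X + c|^p] - \tfrac{1}{\alpha(\alpha-1)}$, maximized over $\tilde a, c \geq 0$.

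The technical heart is the lower bound
\[
\bbE\bigl[|\tilde a^\top X + c|^p\bigr] \;\geq\; 2^{-d}\bigl(K\,\|\tilde a\|_p^p + c^p\bigr),
\]
which I would obtain by conditioning on the event $E=\{X_i \geq 0 \text{ for all } i\}$: symmetry and independence of the $X_i$ give $\Pr(E)=2^{-d}$, and conditionally each $|X_i|$ is an independent copy of $|Y|$. On $E$ every summand is nonnegative, so two applications of Bernoulli's inequality $(x+y)^p\geq x^p+y^p$ for $x,y\geq 0$ and $p\geq 1$ give $(\tilde a^\top X + c)^p \geq \sum_i \tilde a_i^p X_i^p + c^p$; taking expectation with $\bbE[|Y|^p]=K$ produces the stated bound, and the $2^{-d}$ factor is exactly the $0.5^d$ appearing in the denominator of the theorem.

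Substituting the lower bound makes the objective separable; the $\tilde a$-part decomposes into $d$ one-variable problems $\sup_{\tilde a_i \geq 0}\,\tilde a_i |v_i| - C_\alpha\, 2^{-d} K\, \tilde a_i^p$, each solved by Lemma~\ref{lem:dual-max} to yield $|v_i|^\alpha/(\alpha(\alpha-1)(0.5^d K)^{\alpha-1})$, and summing recovers the $\|v\|_\alpha^\alpha$ factor in the claim. The main obstacle is the parallel $c$-optimization: ensuring that $\sup_{c \geq 0}\,c - C_\alpha\, 2^{-d}\, c^p$ combined with the subtracted $1/(\alpha(\alpha-1))$ does not leave a residual additive constant is precisely what forces the $c^p$ coefficient in the lower bound to match the $K\|\tilde a\|_p^p$ coefficient; a weaker bound would leave behind an extra $(2^{d(\alpha-1)}-1)/(\alpha(\alpha-1))$, and removing it requires either a sharper Clarkson-type inequality available when $p \geq 2$ (equivalently $\alpha \leq 2$) or a more delicate joint analysis of the $(\tilde a, c)$-optimum.
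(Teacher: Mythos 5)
Your route is the paper's route almost line for line: the variational form of $\dist_\alpha^{\lin}$ from Lemma~\ref{lem:alphadivdual}, reduction to nonnegative parameters via the symmetry of $Y$ (Lemma~\ref{lem:divdual-sym}), conditioning on the all-signs-agree event of probability $2^{-d}$ combined with superadditivity of $t\mapsto t^{p}$ on the nonnegatives, and coordinatewise optimization via Lemma~\ref{lem:dual-max}. The place where you stop is, however, a genuine gap. The bound your conditioning argument actually delivers is $\bbE[|\tilde a^{\top}X+c|^{p}]\geq 2^{-d}\bigl(K\|\tilde a\|_{p}^{p}+c^{p}\bigr)$, and with the $2^{-d}$ attached to $c^{p}$ the $c$-optimization leaves exactly the residual $(2^{d(\alpha-1)}-1)/(\alpha^{2}-\alpha)$ you identify; carried through Equation~\eqref{eqn:alpharenyi} this proves only $\renyidiv^{\lin}(P,Q)\leq d\log 2+\frac{1}{\alpha-1}\log\bigl(1+\|v\|_\alpha^{\alpha}/K^{\alpha-1}\bigr)$, i.e.\ the stated bound plus an additive term of up to $d\log 2$, not the theorem as written. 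You correctly diagnose that what is needed is the stronger inequality $\bbE[|\tilde a^{\top}X+c|^{p}]\geq 2^{-d}K\|\tilde a\|_{p}^{p}+c^{p}$ with unit coefficient on $c^{p}$, and you do not supply it in the relevant regime: your Clarkson observation covers $p\geq 2$, i.e.\ $1<\alpha\leq 2$, whereas Corollary~\ref{thm:linearbound-coro} and the downstream Matrix-mechanism result invoke the theorem for $\alpha>2$.

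You should also know that the paper's own proof closes this gap only by fiat. In its displayed chain it passes from $\frac{1}{2^{d}}\bbE[\,|a^{\top}x+b|^{A}\mid E\,]$ to $\frac{1}{2^{d}}\bbE[\sum_{i}(a_{i}x_{i})^{A}\mid E]+b^{A}$, pulling $b^{A}$ outside the $2^{-d}$ normalization; this does not follow from the conditioning-plus-superadditivity argument and is false in general. For instance, take $d=1$, $Y$ uniform on $\{\pm1\}$ (so $K=1$), $\tilde a=c=1$, and $p=3/2$ (i.e.\ $\alpha=3$): then $\bbE[|X+1|^{3/2}]=\sqrt{2}\approx 1.414$, while the claimed lower bound is $\tfrac12 K+1=1.5$. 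A large-$c$ expansion shows the same failure for Gaussian and Laplace $Y$ whenever $p<2$, which is precisely the $\alpha>2$ regime. So your proposal reproduces the paper's argument up to the exact step at which the paper's proof is itself unsound; the honest conclusion available by this method is the weaker bound with the extra $2^{d(\alpha-1)}$ factor inside the logarithm, and establishing the theorem as stated for $\alpha>2$ would require the more delicate joint analysis of the $(\tilde a,c)$-optimum that you mention but do not carry out.
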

\begin{proof}
  We apply Lemma~\ref{lem:divdual-sym}. For simplicity, let $A = \frac{\alpha}{\alpha-1}$.
  Then,
  \begin{equation}\label{eq:opti}
    \dist_{\alpha}^{\lin}(P , Q) =
    \sup_{a \in \bbR^d, \sgn(a_i) = \sgn(v_i), b \in \bbR,b\geq 0} a^Tv+b - C_{\alpha}
    \bbE_{x \sim Q}[|a^Tx+b|^{A}] - \frac{1}{\alpha^2-\alpha}
  \end{equation}

  Because $Q$ is symmetric, we can write
  \begin{align*}
    \bbE_{x \sim Q}[|a^Tx+b|^{A}] &\geq \frac{1}{2^d}\bbE_{x \sim Q\vert
    \sgn(x) = \sgn(a_i)}[|a^Tx+b|^{A}] \\
    &\geq \frac{1}{2^d}\bbE_{x \sim Q \vert
    \sgn(x_i) = \sgn(a_i)}\left[\sum_{i=1}^d(a_ix_i)^{A}\right]
    + b^{A} \\
    &= \frac{1}{2^d}\sum_{i=1}^d
    \bbE_{x_i \sim Y \vert \sgn(x_i) = \sgn(a_i)}
    [(a_ix_i)^{A}] + b^{A} \\
    &= \frac{1}{2^d} \sum_{i=1}^d |a_i|^A
    \bbE_{x_i \sim Y \vert \sgn(x_i) = \sgn(a_i)}
    [|x_i|^{A}] + b^{A} \\
    &= \frac{1}{2^d} \sum_{i=1}^d |a_i|^A K + b^A
  \end{align*}
  Here, the first step comes from discarding the parts of the expectation where
  $\sgn(a_i) \neq \sgn(x)$ which is possible because the argument of the
  expectaion is always positive. Finally, the set that remains has measure
  measure $\frac{1}{2^d}$, so we normalize accordingly.
  The second step comes from the fact that $|a^Tx+b|^A > \sum_{i=1}^d(a_ix_i)^A +
  b^A$ when $b, a_ix_i > 0$; the third comes from linearity of
  expecation; the fourth from the fact that $a_i$ and $x_i$ have the same sign;
  and the fifth from the fact that $Y$ is symmetric.
  We can now plug into~(\ref{eq:opti}) and simplify.
  \begin{align*}
    \dist_\alpha^{\lin}(P,Q) &\leq \sup_{a\in \bbR^d,\sgn(a_i)=\sgn(v_i),b \in \bbR,
    b \geq 0} a^Tv+b
    - C_\alpha \frac{1}{2^d} \sum_{i=1}^d |a_i|^A K - C_\alpha b^A 
    - \frac{1}{\alpha^2-\alpha}\\
    &\leq \sum_{i=1}^d \sup_{a_i \in \bbR,\sgn(a_i)=\sgn(v_i) } a_iv_i -
    \frac{C_\alpha K}{2^d} |a_i|^A + \sup_{b \in \bbR, b > 0} b- C_\alpha b^A
    - \frac{1}{\alpha^2-\alpha}
    \\
    &\leq \sum_{i=1}^d |v_i| \sup_{a_i \in \bbR, a_i > 0} a_i -
    \frac{C_\alpha K}{2^d|v_i|} a_i^A + \sup_{b \in \bbR, b > 0} b- C_\alpha b^A
    - \frac{1}{\alpha^2-\alpha} \\
    &= \sum_{i=1}^d |v_i|
    \frac{(\alpha-1)^{\alpha-1}}{\alpha^{\alpha}}
    \frac{|v_i|^{\alpha-1}(2^d)^{\alpha-1}}{C_\alpha^{\alpha-1}K^{\alpha-1}}
    + \frac{(\alpha-1)^{\alpha-1}}{\alpha^\alpha C_\alpha^{\alpha-1} }
    - \frac{1}{\alpha^2-\alpha}\\
    &= \sum_{i=1}^d |v_i|
    \frac{1}{\alpha^2-\alpha}
    \frac{|v_i|^{\alpha-1}(2^d)^{\alpha-1}}{K^{\alpha-1}}
    + \frac{1}{\alpha^2-\alpha} - \frac{1}{\alpha^2-\alpha} \\
    &= \frac{2^{d(\alpha-1)}\|v\|_\alpha^\alpha}{(\alpha^2-\alpha)K^{\alpha-1}}
  \end{align*}
  Here, the second line comes from the fact that a sup of a sum is at most the
  sum of sups, the third is from pulling out $v_i$ from the sup and being
  careful about signs, the fourth from applying Lemma~\ref{lem:dual-max}, the fifth
  from plugging in for the $C_\alpha$ term and simplification,
  and the sixth from writing the answer in terms of norms.
  \end{proof}
  \begin{corollary}\label{thm:linearbound-coro}
    For all $\alpha > 2$, the following are true:
    \begin{align*}
      \renyidiv^\lin(\calN(0, \sigma^2I_d), \calN(v, \sigma^2I_d))
        &\leq \frac{1}{\alpha-1} \log\left( 1 +
      \frac{\|v\|_\alpha^\alpha}{(0.5^d \times \sqrt{2/\pi}
      )^{\alpha-1}\sigma^\alpha}\right) \\
      \renyidiv^\lin(Lap_d(0, 1/\epsilon), Lap_d(v,
      1/\epsilon)) &\leq \frac{1}{\alpha-1} \log\left( 1 +
      \frac{\|v\|_\alpha^\alpha \epsilon^\alpha}{(0.5^d
      )^{\alpha-1}}\right)
    \end{align*}
  \end{corollary}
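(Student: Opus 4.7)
The plan is to reduce both bounds directly to Theorem~\ref{thm:linearbound} by (a) verifying the symmetry hypothesis, (b) computing the $p$-th absolute moment $K = \bbE[|Y|^{\alpha/(\alpha-1)}]$ of the underlying scalar distribution $Y$, and (c) lower-bounding $K$ by a clean closed-form expression so that after substituting into $\frac{\|v\|_\alpha^\alpha}{(0.5^d K)^{\alpha-1}}$ and using monotonicity of the logarithm, the stated bounds fall out. The first step is immediate: a mean-zero Gaussian $\calN(0,\sigma^2)$ and a zero-location Laplace $Lap(0,1/\epsilon)$ are symmetric about $0$, and $\calN(v,\sigma^2 I_d)$, $Lap_d(v, 1/\epsilon)$ have the form $X + v$ with $X$ consisting of $d$ i.i.d.\ copies of the corresponding scalar $Y$, matching the hypothesis of Theorem~\ref{thm:linearbound} exactly.

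For the Laplace case, a direct gamma integral gives $K = \bbE[|Y|^{p}] = \Gamma(p+1)/\epsilon^{p}$ with $p = \alpha/(\alpha-1)$. The condition $\alpha > 2$ forces $p \in (1,2)$ and hence $p+1 \in (2,3)$, on which interval $\Gamma$ is increasing with $\Gamma(2)=1$. Thus $\Gamma(p+1) \geq 1$ and $K^{\alpha-1} \geq \epsilon^{-p(\alpha-1)} = \epsilon^{-\alpha}$. Plugging this lower bound into the right-hand side of Theorem~\ref{thm:linearbound} and using the monotonicity of $\log$ gives the stated Laplace bound.

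For the Gaussian case, $K = \sigma^{p}\bbE[|Z|^{p}]$ with $Z \sim \calN(0,1)$, and the classical moment formula gives $\bbE[|Z|^{p}] = 2^{p/2}\Gamma((p+1)/2)/\sqrt{\pi}$. The key subclaim is that $\bbE[|Z|^{p}] \geq \sqrt{2/\pi}$ for all $p \in [1,2]$, which yields $K^{\alpha-1} \geq \sigma^{\alpha}(2/\pi)^{(\alpha-1)/2}$; substituting this into Theorem~\ref{thm:linearbound} produces the claimed Gaussian bound. To verify the subclaim, I would differentiate $\log \bbE[|Z|^{p}]$ to obtain $\tfrac{1}{2}\log 2 + \tfrac{1}{2}\psi((p+1)/2)$, note that on $p \in [1,2]$ this derivative is lower-bounded by $\tfrac{1}{2}(\log 2 + \psi(1)) = \tfrac{1}{2}(\log 2 - \gamma) > 0$, conclude that $p \mapsto \bbE[|Z|^{p}]$ is increasing on $[1,2]$, and use $\bbE[|Z|] = \sqrt{2/\pi}$ as the base point.

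The main obstacle is the Gaussian moment bound, since monotonicity of $\bbE[|Z|^{p}]$ in $p$ has to be extracted from digamma estimates; the Laplace case reduces to a single gamma integral and a one-line monotonicity observation. Once these moment facts are in place, both parts of the corollary follow by a routine substitution into Theorem~\ref{thm:linearbound} and monotonicity of $\log$.
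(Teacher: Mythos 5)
Your proposal is correct and follows essentially the same route as the paper: both reduce to Theorem~\ref{thm:linearbound} and lower-bound $K$ by the value of the $\gamma$-th absolute moment at the endpoint $\gamma=1$, giving $\sqrt{2/\pi}\,\sigma^{\alpha/(\alpha-1)}$ for the Gaussian and $\epsilon^{-\alpha/(\alpha-1)}$ for the Laplace. The only difference is that the paper simply asserts the moment is minimized at $\gamma=1$ over $[1,2]$, whereas you supply the justification (monotonicity of $\Gamma$ on $(2,3)$ and the digamma estimate for the Gaussian), which is a welcome but not structurally different addition.
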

  \begin{proof}
    When $\alpha \geq 2$, $\frac{\alpha}{\alpha-1}$ is between 1 and
    2, a rather small range. If $Y = \calN(0, \sigma^2)$, then 
    \[
      K =
      \sigma^{\alpha/(\alpha-1)}\bbE[|\tilde{Y}|^{\alpha/(\alpha-1)}] \geq
      \sigma^{\alpha/(\alpha-1)}\inf_{1 \leq \gamma \leq 2}
      \bbE[|\tilde{Y}|^{\gamma}]
    \]
    where $\tilde{Y} = \calN(0, 1)$.
    $\bbE[|\tilde{Y}|^\gamma]$ is minimized when $\gamma=1$, and we get $K \geq
    \sigma^{\alpha/(\alpha-1)} \sqrt{\frac{2}{\pi}} \approx
    0.79 \sigma^{\alpha/(\alpha-1)}$. We then apply Theorem~\ref{thm:linearbound}.

    If $Y = Lap(0, \frac{1}{\epsilon})$, then
    \[
      K =
      \epsilon^{\alpha/(\alpha-1)}\bbE[|\tilde{Y}|^{\alpha/(\alpha-1)}] \geq
      \frac{1}{\epsilon^{\alpha/(\alpha-1)}}\inf_{1 \leq \gamma \leq 2}
      \bbE[|\tilde{Y}|^{\gamma}]
    \]
    where $\tilde{Y} = Lap(0, 1)$.
    $\bbE[|\tilde{Y}|^\gamma]$ is minimized when $\gamma=1$, and we get $K \geq
    \frac{1}{\epsilon^{\alpha/(\alpha-1)}} $. We then apply
    Theorem~\ref{thm:linearbound}.
  \end{proof}
  
\begin{proof} (Of Theorem~\ref{thm:matrix-mechanism})
  Let $x=(x_1,x_2,\ldots,x_n)$, and the columns of $A$ be
  $(a_1,a_2,\ldots,a_n)$. Changing $D$ in one place results in a change by $1$
  in at most one $x_i$.
  Thus, $Ax = \sum_{i=1}^n x_ia_i$ has $L_1$ sensitivity $\|A\|_1$. We can use
  the multidimensional Laplace mechanism (Corollary~\ref{thm:linearbound-coro})
  which allows us to release
  $\tilde{a} = Ax + \|A\|_1Lap_s(0, 1/\epsilon)$ offering $\calH$-bounded privacy with
  parameter 
  \[
    \frac{1}{\alpha-1} \log\left(1 + \|v\|_\alpha^\alpha 
    2^{s(\alpha-1)}\frac{\epsilon^\alpha}{\|v\|_1^\alpha} \right)
  \]
  where $\calH$ is the set of linear functions : $\bbR^s \rightarrow \bbR$.
  Because $\|v\|_\alpha \leq \|v\|_1$, we can simplify this to
  $\frac{1}{\alpha-1} \log (1 + 2^{(\alpha-1)e}\epsilon^\alpha )$.
  We let $\calG$ be the set of linear functions $\bbR^s \rightarrow \bbR^d$,
  $\calH$ the set of linear functions $\bbR^s \rightarrow \bbR$, and
  $\calI$ the set of linear functions $\bbR^d \rightarrow \bbR$. Notice that
  $WA^\dag \in \calG$ and $i \circ g \in \calH$ for all $i\in \calI$, $g \in
  \calG$. Relasing $M_A(W,x,\epsilon) = WA^\dag\tilde{a}$ then satisfies $\calH$-
  capacity bounded privacy by post-processing.
\end{proof}

\section{Generalization}

\begin{proof} (Of Theorem~\ref{thm:generalize})
Let $S = \{ x_1, \ldots, x_n \}$ where $x_i$ are drawn iid from an underlying data distribution $D$. Let $S_{i \rightarrow x}$ denote $S$ with its $i$-th element $x_i$ replaced by $x$. Then, we have:
\begin{eqnarray*}
\bbE_{S \sim D^n, M}\left(\frac{1}{n} \sum_{i=1}^{n} q_S(x_i) - \bbE_{x \sim D}[q_S(x)]\right) & = & \bbE_{S \sim D^n} \frac{1}{n} \sum_{i=1}^{n} \left( \bbE_M[q_S(x_i)] - \bbE_{x \sim D, M}[q_S(x)] \right) \\
& = & \bbE_{S \sim D^n} \frac{1}{n} \sum_{i=1}^{n} \left( \bbE_M[q_S(x_i)] - \bbE_{x \sim D, M}[q_{S_{i \rightarrow x}}(x_i)] \right)
\end{eqnarray*}
Here the first step follows from algebra, and the second step follows from observing that when $S \sim D^n, x \sim D$, $q_S(x)$ has the same distribution as when $S \sim D^n, x \sim D$, $q_{S_{i \rightarrow x}}(x_i)$. 
We pick any $i$. The term:
\begin{eqnarray*}
\bbE_M[q_S(x_i)] - \bbE_M[q_{S_{i \rightarrow x}}(x_i)] & = & \bbE_{q \sim M(S)}[q(x_i)] - \bbE_{q \sim M(S_{i \rightarrow x})}[q(x_i)] \\
&  = & \bbE_{q \sim M(S)}[h_{x_i}(q)] - \bbE_{q \sim M(S_{i \rightarrow x})}[h_{x_i}(q)] \\
& \leq & \sup_{h \in \calH} \bbE_{q \sim M(S)}[h(q)] - \bbE_{q \sim M(S_{i \rightarrow x})}[h(q)] \\
&  \leq & \epsilon 
\end{eqnarray*}
Here the first step follows from simplifying notation, the second from the definition of $h_{x_i}$, the third from the fact that $\calH$ includes $h_{x_i}$ and the fourth from the fact that $\ipm^{\calH}(M(S), M(S_{i \rightarrow x})) \leq \epsilon$. The theorem follows from combining this with Theorem~\ref{thm:pinsker}.
\end{proof}

\begin{proof}(Of Theorem~\ref{thm:pinsker})
If $\calH$ is translation invariant and convex, then, we can write the $\calH$-restricted KL divergence between any  two distributions $P$ and $Q$ as follows:~\cite{Liu2018:inductive, Farnia2018:gan}
\begin{equation}\label{eqn:restricteddiv}
\KL^{\calH}(P, Q) = \inf_{\tilde{P}} \KL(\tilde{P}, Q) + \sup_{h \in \calH} \bbE_{x \sim P}[h(x)] - \bbE_{x \sim \tilde{P}}[h(x)] 
\end{equation}
Let $P'$ be the $\tilde{P}$ that achieves the infimum in~\eqref{eqn:restricteddiv}. Then, from Pinsker Inequality, the left hand side of~\eqref{eqn:restricteddiv} is at least:
\[ \frac{1}{2} (TV(P', Q))^2 + \sup_{h \in \calH} \bbE_{x \sim P} [h(x)] - \bbE_{x \sim P'}[h(x)] \]
Let $\calF$ be the class of all functions with range $[-1, 1]$; by definition of the total variation distance, and because $\calH \subseteq \calF$, we have:
\[ \ipm^{\calF}(P, Q) = 2 TV(P, Q) \geq \ipm^{\calH}(P, Q) \]
Therefore $\KL^{\calH}(P, Q)$ is at least
\begin{eqnarray*}
& \geq & \frac{1}{8} (\ipm^{\calH}(P', Q))^2 + \ipm^{\calH}(P, P') \\
& \geq & \frac{1}{16} (\ipm^{\calH}(P', Q))^2 + (\ipm^{\calH}(P, P'))^2 \\
& \geq & \frac{1}{64} (\ipm^{\calH}(P', Q) + \ipm^{\calH}(P, P'))^2 \\
& \geq & \frac{1}{64} (\ipm^{\calH}(P, Q))^2
\end{eqnarray*}
Here the first step follows from Lemma~\ref{lem:positiveipm}, and the second step because as the range of any $h$ is $[-1, 1]$, $\ipm^{\calH}(P, P') \leq 2$, and hence $\ipm^{\calH}(P, P') \geq \frac{1}{2}(\ipm^{\calH}(P, P'))^2$. The third step follows because for any $a$ and $b$, $\frac{a^2}{2} + b^2 \geq \frac{1}{8}(a + b)^2$, and the final step from the triangle inequality of IPMs. The theorem thus follows.
\end{proof}

\begin{lemma}
Let $\calH$ be a function class that is closed under negation. Then, for any two distributions $P$ and $P'$,
\[ \ipm^{\calH}(P, P') = \sup_{h \in \calH} \bbE_{x \sim P} [h(x)] - \bbE_{x \sim P'}[h(x)] \]
\label{lem:positiveipm}
\end{lemma}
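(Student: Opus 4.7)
The plan is to exploit closure under negation to remove the absolute value in the definition of the IPM. Denote the right-hand side by $R(P,P') = \sup_{h \in \calH} \bbE_{x \sim P}[h(x)] - \bbE_{x \sim P'}[h(x)]$. Since $|y| \geq y$ for any real $y$, each term inside the supremum defining $\ipm^\calH(P,P')$ upper-bounds the corresponding term in $R(P,P')$, so the inequality $\ipm^\calH(P,P') \geq R(P,P')$ is immediate.

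For the reverse inequality, the key observation is that for any $h \in \calH$,
\[
  |\bbE_{x \sim P}[h(x)] - \bbE_{x \sim P'}[h(x)]|
  = \max\bigl(\bbE_P[h] - \bbE_{P'}[h],\ \bbE_P[-h] - \bbE_{P'}[-h]\bigr),
\]
by linearity of expectation. Since $\calH$ is closed under negation, both $h$ and $-h$ are feasible in the supremum defining $R(P,P')$, so each absolute value on the left is attained by some (possibly different) function already available on the right. Taking the supremum over $h \in \calH$ on both sides then gives $\ipm^\calH(P,P') \leq R(P,P')$, completing the equality.

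There is no real obstacle here — the lemma is essentially a one-line observation that closure under negation lets the sup ``absorb'' the absolute value — but it must be stated carefully because the $h$ that realizes $|\bbE_P[h] - \bbE_{P'}[h]|$ with the correct sign may differ from the original $h$; this is precisely what closure under negation guarantees.
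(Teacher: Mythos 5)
Your proof is correct and follows essentially the same route as the paper's: one direction is immediate from $|y|\ge y$, and the other uses closure under negation to realize the absolute value via either $h$ or $-h$. If anything, your version is marginally more careful, since by bounding $|\bbE_P[h]-\bbE_{P'}[h]|$ for each $h$ and then taking the supremum you avoid the paper's (inessential) assumption that the supremum in $\ipm^{\calH}(P,P')$ is attained by some $h'\in\calH$.
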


\begin{proof}
Observe that $\ipm^{\calH}(P, P') \geq \sup_{h \in \calH} \bbE_{x \sim P} [h(x)] - \bbE_{x \sim P'}[h(x)]$ by definition. 

Now let $h'$ be function in $\calH$ that achieves the supremum in $\ipm^{\calH}(P, P')$. If $\bbE_{x \sim P} [h'(x)] \geq \bbE_{x \sim P} [h'(x)]$, then 
\[ \sup_{h \in \calH} \bbE_{x \sim P} [h(x)] - \bbE_{x \sim P'}[h(x)] \geq \bbE_{x \sim P} [h'(x)] - \bbE_{x \sim P'}[h'(x)] = \ipm^{\calH}(P, P'), \]
If not, then, $\bbE_{x \sim P} [-h'(x)] \geq \bbE_{x \sim P} [-h'(x)]$, and
\[ \sup_{h \in \calH} \bbE_{x \sim P} [h(x)] - \bbE_{x \sim P'}[h(x)] \geq \bbE_{x \sim P} [-h'(x)] - \bbE_{x \sim P'}[-h'(x)] = \ipm^{\calH}(P, P') \]
The lemma follows.
\end{proof}

\end{document}